\documentclass[10pt, conference, compsocconf]{IEEEtran}
\pdfoutput=1
\usepackage{cite}
\usepackage{url}
\usepackage{amsmath,amssymb,amsthm}
\usepackage{graphicx}
\usepackage[ruled]{algorithm2e}
\usepackage{bm}

\newtheorem{theorem}{Theorem}

\newtheorem{definition}[theorem]{Definition}
\newtheorem{corollary}[theorem]{Corollary}
\newtheorem{lemma}[theorem]{Lemma}
\newtheorem{remark}[theorem]{Remark}
\newtheorem{problem}[theorem]{Problem}
\newcommand{\trace}{\mathop{\mathrm{trace}}}
\newcommand{\diag}{\mathop{\mathrm{diag}}}
\newcommand{\Diag}{\mathop{\mathrm{Diag}}}
\newcommand{\rank}{\mathop{\mathrm{rank}}}
\newcommand{\Null}{\mathop{\mathrm{Null}}}
\usepackage{hyperref}
\hypersetup{
    colorlinks=true,%
    citecolor=blue,%
    filecolor=blue,%
    linkcolor=blue,%
    urlcolor=blue
}
\begin{document}
%
\title{Robust Low-Rank Subspace Segmentation with Semidefinite Guarantees}

\author{
Yuzhao Ni$^3$, Ju Sun$^{1, 2}$, Xiaotong Yuan$^2$, Shuicheng Yan$^2$, Loong-Fah Cheong$^2$ \\
\small
$^1$ Interactive \& Digital Media Institute, National University of Singapore, Singapore\\
$^2$ Department of Electrical and Computer Engineering, National University of Singapore, Singapore\\
$^3$ School of Computing, National University of Singapore, Singapore
}

\maketitle

\begin{abstract}
Recently there is a line of research work proposing to employ Spectral Clustering (SC) to segment (group)\footnote{Throughout the paper, we use segmentation, clustering, and grouping, and their verb forms, interchangeably.} high-dimensional structural data such as those (approximately) lying on subspaces\footnote{We follow~\cite{liu2010robust} and use the term ``subspace'' to denote both linear subspaces and affine subspaces. There is a trivial conversion between linear subspaces and affine subspaces as mentioned therein.} or low-dimensional manifolds. By learning the affinity matrix in the form of sparse reconstruction, techniques proposed in this vein often considerably boost the performance in subspace settings where traditional SC can fail. Despite the success, there are fundamental problems that have been left unsolved: the spectrum property of the learned affinity matrix cannot be gauged in advance, and there is often one ugly symmetrization step that post-processes the affinity for SC input. Hence we advocate to enforce the symmetric positive semidefinite constraint explicitly during learning (Low-Rank Representation with Positive SemiDefinite constraint, or LRR-PSD), and show that factually it can be solved in an exquisite scheme efficiently instead of general-purpose SDP solvers that usually scale up poorly. We provide rigorous mathematical derivations to show that, in its canonical form, LRR-PSD is equivalent to the recently proposed Low-Rank Representation (LRR) scheme~\cite{liu2010robust}, and hence offer theoretic and practical insights to both LRR-PSD and LRR, inviting future research. As per the computational cost, our proposal is at most comparable to that of LRR, if not less. We validate our theoretic analysis and optimization scheme by experiments on both synthetic and real data sets.
\end{abstract}

\begin{IEEEkeywords}
spectral clustering, affinity matrix learning, rank minimization, robust estimation, eigenvalue thresholding
\end{IEEEkeywords}

\section{Introduction}

This paper deals with grouping or segmentation of high-dimensional data under subspace settings. The problem is formally defined as follows
\begin{problem}[Subspace Segmentation]
Given a set of sufficiently dense data vectors $\mathbf{X}=\left[\mathbf{x}_1, \cdots, \mathbf{x}_n\right]$, $\mathbf{x}_i\in\mathbb{R}^d$ representing a sample $\forall i=1, \cdots, n$. Suppose the data are drawn from a union of $k$ subspaces $\left\{\mathbf{S}_i\right\}_{i=1}^k$ of unknown dimensions $\left\{d_i\right\}_{i=1}^k$ respectively, segment all data vectors into their respective subspaces.
\end{problem}
In this regard, the vast number of available clustering algorithms, ranging from the most basic k-means method to the most elegant and sophisticated spectral clustering (SC) method, can all be used towards a solution. Nevertheless, there are strong reasons to believe that exploiting the very regularity associated with the data can enhance the clustering performance.

We choose SC as the basic framework for subspace segmentation. SC has been extensively researched (see ~\cite{von2007tutorial} for a recent review) and employed for many applications (e.g. image segmentation~\cite{shi2000normalized} in computer vision). SC has the remarkable capacity to deal with highly complicated data structures which may easily fail simple clustering methods such as k-means. The excellent performance of SC can be partially explained via its connection to the kernel method which has been extensively studied in machine learning, specially the recent unification of weighted kernel k-means and SC~\cite{dhillon2004kernel}. The implicit data transformation into higher-dimensional spaces is likely to make the clustering task easy for basic clustering algorithms.

Analogous to the freedom to choose the kernel function in kernel methods, SC is flexible enough to admit any similarity measures in the form of affinity matrices as its input. Despite the existence of research work on SC with general affinity matrices that are not positive semidefinite (see e.g., \cite{meila2007clustering}), in practice the Gaussian kernel $s\left(\mathbf{x}_i, \mathbf{x}_j\right)=\exp\left(-\Vert \mathbf{x}_i-\mathbf{x}_j\Vert^2/\sigma^2\right)$ and the linear kernel $s\left(\mathbf{x}_i, \mathbf{x}_j\right)=\mathbf{x}^\top\mathbf{x}_j$ are evidently the most commonly employed. Use of these kernels naturally ensures about symmetry and positive semidefiniteness of the affinity matrix. When there are processing steps that cause asymmetry, e.g., construction of nearest-neighbors based affinity matrix, there is normally an additional symmetrization step involved before the subsequent eigen-analysis on the resultant Laplacian matrix in normal SC routines~\cite{von2007tutorial}.

\subsection{Prior Work on Subspace Segmentation}
The most intuitive way to solve the subspace segmentation problem is perhaps by robust model fitting. In this aspect, classic robust estimation methods such as RANSAC~\cite{fischler1981random} and Expectation Minimization~\cite{duda2001pattern} can be employed, based on some assumptions about the data distribution, and possibly also the parametric form (e.g., mixture of Gaussians).

Most customized algorithms on this problem, however, are contributed from researchers in computer vision, to solve the 3D multibody motion segmentation (MMS) problem (see e.g., \cite{tron2007benchmark} for the problem statement and review of existing algorithms). In this problem, geometric argument shows that trajectories of same rigid-body motion lie on a subspace of dimension at most $4$. Hence MMS serves as a typical application of subspace segmentation. There are factorization based methods~\cite{costeira1998multibody}, algebraic methods exemplified by the Generalized Principal Component Analysis (GPCA)~\cite{ma2008estimation}, and local subspace affinity (LSA)~\cite{yan2006general} to address MMS. They are all directly or indirectly linked to SC methods, and can be considered as different ways to construct the affinity matrix for subsequent SC (for the former is similar to the linear kernel\footnote{Wei and Lin~\cite{wei2010sp} have concurrently got similar results as produced in Sec.\ref{sec:overlap} of the current paper, and also they have identified the closed-form solution of LRR with that of the shape interaction matrix (SIM) in the classic factorization method.}, and the latter two kernels defined with local subspace distances).

Of special interest to the current investigation is the recent line of work on constructing the affinity matrix by sparsity-induced optimization. Cheng et al~\cite{cheng2010L1} ($\ell_1$ graph ) and Elhamifar et al~\cite{elhamifar2009sparse} (sparse subspace clustering, SSC) have independently proposed to use sparse reconstruction coefficients as similarity measures. To obtain the sparse coefficients, they reconstruct one sample using all the rest samples, while regularizing the coefficient vector by $\ell_1$ norm to promote sparsity. Hence the problem to solve boils down to the Lasso (i.e., $\ell_1$-regularized least square problem, \cite{tibshirani1996regression}), which has been well studied on theoretic and computational sides (ref e.g., \cite{becker2009nesta}). Most recently Liu et al~\cite{liu2010robust} has proposed to compute the reconstruction collectively, and regularize the rank of the affinity matrix for capturing global data structures. This is made possible by employing the nuclear norm minimization as a surrogate, and they also provide a robust version to resist noise and outliers.

Nuclear norm minimization as a surrogate for rank minimization is a natural generalization of the trace heuristic used for positive semidefinite matrices in several fields, such as control theory~\cite{fazel2002matrix}. The need for rank minimization has theoretically stemmed from the exploding research efforts in compressed sensing sparkled by the seminal paper~\cite{candes2005decoding}. In fact, generalizing from vector sparsity to spectrum sparsity for matrices is natural. The practical driving forces come from applications such as collaborative filtering, sensor localization, to just name a few~\cite{candes2009exact}. From the computational side, there are several cutting-edge customized algorithms for solving the otherwise large-scale SDP problem that is likely to plague most popular off-the-shelf SDP solvers. These algorithms include prominently singular value thresholding~\cite{cai2008singular}, accelerated proximal gradient (APG)~\cite{toh2009accelerated}, and augmented Lagrange multiplier (ALM) methods~\cite{lin2009augmented} (see \cite{lin2009augmented} for a brief review).
%
%

\subsection{Our Investigation and Contributions}
We advocate to learn an affinity matrix that makes a valid kernel directly, i.e., being symmetric positive semidefinite. This is one critical problem the previous sparse-reconstruction and global low-rank minimization approaches has bypassed. Without consideration in this aspect, the empirical behaviors of the learnt affinity matrices are poorly justified. We will focus on the global framework proposed in~\cite{liu2010robust} as the global conditions are easier to gauge and additional constraints on the learnt affinity matrix can be put in directly.

We will constrain the affinity matrix to be symmetric positive semidefinite directly in LRR-PSD. Surprisingly, during analysis of connection with the canonical form of LRR proposed in~\cite{liu2010robust}, we find out the two formulations are exactly equivalent, and moreover we can accurately characterize the spectrum of the optimal solution. In addition, we successfully establish the uniqueness of the solution to both LRR and LRR-PSD, and hence correct one critical error reported in~\cite{liu2010robust} stating that the optimal solutions are not unique. 

More interestingly, we show that our advocated formulation (LRR-PSD) in its robust form also admits a simple solution like that of LRR as reported in~\cite{liu2010robust}, but at a lower computational cost. As a nontrivial byproduct, we also provide a rigorous but elementary proof to nuclear-norm regularized simple least square problem with a positive semidefinite constraint, which complements the elegant closed-form solution to the general form~\cite{cai2008singular}.

%
%
%
%

To sum up, we highlight our contributions in two aspects: 1) we provide a rigorous proof of the equivalence between LRR and LRR-PSD, and establish the uniqueness of the optimal solution. In addition, we offer a sensible characterization of the spectrum for the optimal solution;
and 2) we show that Robust LRR-PSD can also be efficiently solved in a scheme similar to that of LRR but with notable difference at a possibly lower cost.

\section{Robust Low-Rank Subspace Segmentation with Semidefinite Guarantees} \label{sec:analysis}
We will first set forth the notation used throughout the paper, and introduce necessary analytic tools. The canonical optimization framework of learning a low-rank affinity matrix for subspace segmentation/clustering will be presented next, and the equivalence between LRR-PSD and LRR will be formally established. Taking on the analysis, we briefly discuss about the spectrum in the robust versions of LRR-PSD and LRR, and touches on other noise assumptions. We will then proceed to present the optimization algorithm to tackle LRR-PSD under noisy settings (i.e., Robust LRR-PSD).
\subsection{Notation and Preliminaries}
\subsubsection{Summary of Notations}
We will use bold capital and bold lowercase for matrices and vectors respectively, such as $\mathbf{X}$ and $\mathbf{b}$, and use normal letters for scalars and entries of matrices and vectors, e.g., $\lambda$, $X_{ij}$ (the $\left(i, j\right)^{th}$ entry of matrix $\mathbf{X}$). We will consider real vector and matrix spaces exclusively in this paper and use $\mathbb{R}^n$ or $\mathbb{R}^{m\times n}$ alike to denote the real spaces of proper dimensionality or geometry.

We are interested in five norms of a matrix $\mathbf{X}\in \mathbb{R}^{m\times n}$. The first three are functions of singular values $\left\{\sigma_i\right\}$ and they are: 1) the operator norm or induced 2-norm denoted by $\Vert\mathbf{X}\Vert_2$, which is essentially the largest singular value $\sigma_{\max}$; 2) the Frobenius norm, defined as $\Vert \mathbf{X}\Vert_F=\left(\sum_{i=1}^d \sigma_i^2\right)^{1/2}$; and 3) the nuclear norm, or sum of the singular values $\Vert \mathbf{X}\Vert_*=\sum_{i=1}^d \sigma_i$, assuming $d=\min \left(m, n\right)$. The additional two include: 4) the matrix $\ell_1$ norm $\Vert \mathbf{X}\Vert_1$ which generalizes the vector $\ell_1$ norm to the concatenation of matrix columns; and 5) the group norm $\Vert \mathbf{X}\Vert_{2, 1}$, which sums up the $\ell_2$ norms of columns. Besides, the Euclidean inner product between matrices is $\langle \mathbf{X}, \mathbf{Y}\rangle = \trace\left(\mathbf{X}^\top\mathbf{Y}\right)$. This also induces an alternative calculation of the Frobenius norm, $\Vert \mathbf{X}\Vert_F = \sqrt{\trace\left(\mathbf{X}^\top\mathbf{X}\right)}$.

We will denote the spectrum (the set of eigenvalues) of a square matrix by $\bm\lambda\left(\mathbf{N}\right)$, for $\mathbf{N}\in \mathbb{R}^{n\times n}$ (similarly the collection of singular values for a general matrix $\bm\sigma\left(\mathbf{X}\right)$). We denote the set of all $n\times n$ real symmetric matrix by $\mathcal{S}^n$, and the corresponding positive semidefinite cone as $\mathcal{S}_+^n$ and $\mathbf{N}\in \mathcal{S}^n_+ \Longleftrightarrow \bm\lambda\left(\mathbf{N}\right)\geq \mathbf{0}$ and $\mathbf{N}\in \mathcal{S}^n$, in which $\mathbf{N}$ is said to be positive semidefinite and simply designated as $\mathbf{N}\succeq \mathbf{0}$. We reiterate the requirement on symmetry here since conventionally definiteness is not defined for asymmetric matrices.

In addition, we all use $\diag\left(\mathbf{X}\right)$ and $\Diag\left(\mathbf{b}\right)$ to mean taking the diagonal vector of a matrix and reshape a vector into a diagonal matrix, respectively. Other notations such as $\trace\left(\mathbf{X}\right)$, $\rank\left(\mathbf{X}\right)$ manifest themselves literally.
\subsubsection{Nuclear Norm Minimization and Rank Minimization}
We choose to devote this subsection to reviewing more concrete properties about the nuclear norm due to its significance to this paper in particular and to the whole range of work on low-rank minimization in general.
\begin{definition}[Unitarily Invariant Norms]
A matrix norm $\Vert\cdot\Vert$ is unitarily invariant if $\Vert \mathbf{X} \Vert = \Vert \mathbf{UXV} \Vert$ for all matrices $\mathbf{X}$ and all unitary matrices $\mathbf{U}$ and $\mathbf{V}$ (i.e. $\mathbf{U}^{-1}=\mathbf{U}^\top, \mathbf{V}^{-1}=\mathbf{V}^\top$) of compatible dimension.
\end{definition}
Interestingly common encountered unitarily invariant norms are all functions of the singular values, and lie within two general families: 1) Schatten-$p$ norms, arising from applying the $p$-norm to the vector of singular values, $\Vert \mathbf{X} \Vert_{Sp} = \left(\sum_{i=1}^{d} \sigma_i^p\right)^{1/p}$; and 2)Ky-Fan $k$ norms, representing partial ordered sums of largest singular values, $\Vert \mathbf{X} \Vert_{KFk} = \sum_{i=1}^{k} \sigma_i$, assuming $k\leq d$ and $\sigma_1\geq \cdots \geq \sigma_d$ all for $d=\min\left(m, n\right)$. Of our interest here is that the nuclear norm and Frobenius norm are Schatten-$1$ norm and Schatten-$2$ norms, respectively. This fact will be critical for several places of our later argument.

Next we will state one crucial fact about the duality between the nuclear norm and the operator norm. For any norm $\Vert\cdot\Vert$, its dual norm $\Vert\cdot\Vert^\mathcal{D}$ is defined via the variational characterization ~\cite{recht2008necessary}
\begin{equation}
\Vert \mathbf{X}\Vert^{\mathcal{D}} = \sup_{\mathbf{Y}} \;  \left\{\langle \mathbf{Y}, \mathbf{X}\rangle \; \vert \; \Vert \mathbf{Y} \Vert \leq 1\right\},
\end{equation}
where $\Vert \mathbf{Y}\Vert\leq 1$ can always be taken as equality for the supremum to achieve, since the inner product $\langle \mathbf{Y}, \mathbf{X}\rangle$ is homogeneous w.r.t. $\mathbf{Y}$. Then we have a formal statement about the duality
\begin{lemma}[\cite{recht2008guaranteed}, Proposition 2.1]\label{lemma:duality}
The dual norm of the operator norm $\Vert\cdot\Vert_2$ in $\mathbb{R}^{m\times n}$ is the nuclear norm $\Vert \cdot\Vert_*$.
\end{lemma}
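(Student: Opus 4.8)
The plan is to prove the claimed equality by sandwiching the supremum in the definition of the dual norm between two matching bounds, namely to show that for every $\mathbf{X}\in\mathbb{R}^{m\times n}$,
\begin{equation}
\Vert\mathbf{X}\Vert_2^{\mathcal{D}} = \sup_{\Vert\mathbf{Y}\Vert_2\le 1}\langle\mathbf{Y},\mathbf{X}\rangle = \Vert\mathbf{X}\Vert_*,
\end{equation}
by establishing the upper bound $\langle\mathbf{Y},\mathbf{X}\rangle\le\Vert\mathbf{X}\Vert_*$ for all feasible $\mathbf{Y}$, and then exhibiting a single feasible $\mathbf{Y}$ that attains it. Throughout I would fix a (reduced) singular value decomposition $\mathbf{X}=\mathbf{U}\bm{\Sigma}\mathbf{V}^\top$ with $\mathbf{U}\in\mathbb{R}^{m\times d}$, $\mathbf{V}\in\mathbb{R}^{n\times d}$ having orthonormal columns ($\mathbf{U}^\top\mathbf{U}=\mathbf{V}^\top\mathbf{V}=\mathbf{I}_d$) and $\bm{\Sigma}=\Diag\left(\sigma_1,\dots,\sigma_d\right)$, $d=\min(m,n)$. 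The whole argument rests on two elementary facts: multiplication by $\mathbf{U}^\top$ and $\mathbf{V}$ does not increase the operator norm (since their columns are orthonormal), and the operator norm dominates every diagonal entry of a matrix.

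For the upper bound, I would take any $\mathbf{Y}$ with $\Vert\mathbf{Y}\Vert_2\le 1$ and rewrite the inner product via the cyclic property of the trace, introducing $\mathbf{W}=\mathbf{U}^\top\mathbf{Y}\mathbf{V}$:
\begin{equation}
\langle\mathbf{Y},\mathbf{X}\rangle = \trace\left(\mathbf{Y}^\top\mathbf{U}\bm{\Sigma}\mathbf{V}^\top\right) = \trace\left(\mathbf{W}^\top\bm{\Sigma}\right) = \sum_{i=1}^{d}\sigma_i W_{ii}.
\end{equation}
Because $\mathbf{U}$ and $\mathbf{V}$ have orthonormal columns, $\Vert\mathbf{W}\Vert_2\le\Vert\mathbf{Y}\Vert_2\le 1$, and since $|W_{ii}|=|\mathbf{e}_i^\top\mathbf{W}\mathbf{e}_i|\le\Vert\mathbf{W}\Vert_2\le 1$ each diagonal entry lies in $[-1,1]$. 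As every $\sigma_i\ge 0$, this gives $\langle\mathbf{Y},\mathbf{X}\rangle\le\sum_{i=1}^{d}\sigma_i=\Vert\mathbf{X}\Vert_*$, hence $\Vert\mathbf{X}\Vert_2^{\mathcal{D}}\le\Vert\mathbf{X}\Vert_*$.

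For attainment I would exhibit the explicit maximizer $\mathbf{Y}^\star=\mathbf{U}\mathbf{V}^\top$ (the case $\mathbf{X}=\mathbf{0}$ being trivial). It is feasible since all its nonzero singular values equal one, so $\Vert\mathbf{Y}^\star\Vert_2=1$; and substituting yields $\langle\mathbf{Y}^\star,\mathbf{X}\rangle=\trace\left(\mathbf{V}\mathbf{U}^\top\mathbf{U}\bm{\Sigma}\mathbf{V}^\top\right)=\trace\left(\bm{\Sigma}\right)=\Vert\mathbf{X}\Vert_*$, using $\mathbf{U}^\top\mathbf{U}=\mathbf{V}^\top\mathbf{V}=\mathbf{I}_d$. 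Combining the two directions closes the proof.

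The main obstacle, such as it is, lies in the upper-bound step: one must ensure the reduction to $\mathbf{W}=\mathbf{U}^\top\mathbf{Y}\mathbf{V}$ genuinely preserves the constraint $\Vert\mathbf{W}\Vert_2\le 1$, and must bound the sum through $|W_{ii}|\le\Vert\mathbf{W}\Vert_2$ rather than through cruder Frobenius or entrywise estimates, which would fail to be tight. An alternative route invokes von Neumann's trace inequality $\langle\mathbf{Y},\mathbf{X}\rangle\le\sum_i\sigma_i(\mathbf{Y})\sigma_i(\mathbf{X})$ directly, but the self-contained diagonal-entry argument above avoids importing that machinery and keeps the derivation elementary, which is in keeping with the spirit of the paper.
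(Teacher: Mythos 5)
Your proof is correct. Note that the paper never proves this lemma itself---it imports it by citation from Recht, Fazel, and Parrilo (their Proposition 2.1), and the argument given there is essentially the one you reconstruct: reduce via the SVD to bounding $\sum_i \sigma_i W_{ii}$ with $\mathbf{W}=\mathbf{U}^\top\mathbf{Y}\mathbf{V}$, use $\vert W_{ii}\vert \le \Vert \mathbf{W}\Vert_2 \le \Vert\mathbf{Y}\Vert_2 \le 1$, and then attain equality with $\mathbf{Y}^\star=\mathbf{U}\mathbf{V}^\top$. Your write-up is a sound, self-contained derivation of a fact the paper treats as a black box, with the two delicate points (norm preservation under $\mathbf{U}^\top(\cdot)\mathbf{V}$, and feasibility of the maximizer) handled correctly.
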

In fact, the duality taken together with the characterization of dual norms implies $\langle \mathbf{Y}, \mathbf{X}\rangle \leq \Vert \mathbf{Y}\Vert_2 \Vert \mathbf{X}\Vert_*$, which has been used extensively in the analysis of nuclear norm problems.

Our last piece of review touches on the core of the problem, i.e., how rank minimization problems (NP-Hard) could be (conditionally) solved via nuclear norm minimization formulation which is convex. This myth lies with the concept of convex envelope, which means the tightest convex pointwise approximation to a function (tightest convex lower bound). Formally, for any (possibly nonconvex, e.g., the rank function currently under investigation) function $f: \mathcal{C}\mapsto \mathbb{R}$, where $\mathcal{C}$ denotes a given convex set, the convex envelope of $f$ is the largest convex function $g$ such that $g\left(x\right) \leq f\left(x\right)$ for all $x\in \mathcal{C}$~\cite{fazel2002matrix}. The following lemma relates the rank function to the nuclear norm via convex envelope
\begin{lemma}[\cite{fazel2002matrix}, Theorem 1, pp.54 and Sec.5.1.5 for proof]
The convex envelope of $\rank\left(\mathbf{X}\right)$ on the set $\left\{\mathbf{X}\in \mathbb{R}^{m\times n}\; \vert \; \Vert \mathbf{X}\Vert_2\leq 1\right\}$ is the nuclear norm $\Vert \mathbf{X}\Vert_*$.
\end{lemma}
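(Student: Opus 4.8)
The plan is to identify the convex envelope of a function $f$ on a convex set $\mathcal{C}$ with its biconjugate $f^{**}$ (the Fenchel--Moreau theorem), taking $f\left(\mathbf{X}\right)=\rank\left(\mathbf{X}\right)$ on $\mathcal{C}=\left\{\mathbf{X}\;\vert\;\Vert\mathbf{X}\Vert_2\leq 1\right\}$ and $f\left(\mathbf{X}\right)=+\infty$ elsewhere. The target then reduces to showing $f^{**}\left(\mathbf{X}\right)=\Vert\mathbf{X}\Vert_*$ for every $\mathbf{X}\in\mathcal{C}$. I would organize the computation as two successive conjugations, each made tractable by reducing a matrix optimization to a separable scalar optimization over singular values.

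First I would compute the conjugate $f^{*}\left(\mathbf{Y}\right)=\sup_{\Vert\mathbf{X}\Vert_2\leq 1}\langle\mathbf{Y},\mathbf{X}\rangle-\rank\left(\mathbf{X}\right)$. The crucial device is von Neumann's trace inequality $\langle\mathbf{Y},\mathbf{X}\rangle\leq\sum_i\sigma_i\left(\mathbf{Y}\right)\sigma_i\left(\mathbf{X}\right)$, whose equality is attained by aligning the singular vectors of $\mathbf{X}$ with those of $\mathbf{Y}$. Since both $\rank\left(\mathbf{X}\right)$ and the constraint $\Vert\mathbf{X}\Vert_2\leq 1$ depend only on the singular values $\sigma_i\left(\mathbf{X}\right)\in\left[0,1\right]$, the supremum decouples across indices: for each $i$ one chooses $\sigma_i\left(\mathbf{X}\right)\in\left[0,1\right]$ to maximize $\sigma_i\left(\mathbf{Y}\right)\sigma_i\left(\mathbf{X}\right)$ minus the rank penalty ($1$ if $\sigma_i\left(\mathbf{X}\right)>0$, else $0$). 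The optimum is $\sigma_i\left(\mathbf{X}\right)=1$ exactly when it pays off, yielding the closed form $f^{*}\left(\mathbf{Y}\right)=\sum_i\left(\sigma_i\left(\mathbf{Y}\right)-1\right)_+$.

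Next I would form the biconjugate $f^{**}\left(\mathbf{X}\right)=\sup_{\mathbf{Y}}\langle\mathbf{X},\mathbf{Y}\rangle-f^{*}\left(\mathbf{Y}\right)$. Invoking von Neumann again to align $\mathbf{Y}$ with $\mathbf{X}$, this becomes a separable supremum over the singular values $\gamma_i:=\sigma_i\left(\mathbf{Y}\right)\geq 0$ of the form $\sup_{\gamma_i\geq 0}\sigma_i\left(\mathbf{X}\right)\gamma_i-\left(\gamma_i-1\right)_+$. A short case analysis on whether $\gamma_i\leq 1$ or $\gamma_i>1$ shows that, precisely when $\sigma_i\left(\mathbf{X}\right)\leq 1$, each term is maximized at $\gamma_i=1$ with value $\sigma_i\left(\mathbf{X}\right)$, whereas any singular value of $\mathbf{X}$ exceeding $1$ drives the supremum to $+\infty$. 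Hence $f^{**}\left(\mathbf{X}\right)=\sum_i\sigma_i\left(\mathbf{X}\right)=\Vert\mathbf{X}\Vert_*$ on $\mathcal{C}$ (and $+\infty$ outside), which is the claim.

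As a check on the easy direction, I would also record that the nuclear norm is a genuine convex minorant: on $\mathcal{C}$ every $\sigma_i\leq 1$, so $\Vert\mathbf{X}\Vert_*=\sum_i\sigma_i\leq\#\left\{i\;\vert\;\sigma_i>0\right\}=\rank\left(\mathbf{X}\right)$, and the nuclear norm is convex as a norm; the biconjugate argument is what upgrades this to the \emph{largest} such minorant. The main obstacle is the faithful reduction from matrices to singular values: both conjugations stay clean only because von Neumann's trace inequality lets the singular-vector degrees of freedom be optimized away and the objective separate index-by-index, and one must treat the boundary case $\sigma_i\left(\mathbf{X}\right)=1$ and the unbounded regime $\sigma_i\left(\mathbf{X}\right)>1$ with care.
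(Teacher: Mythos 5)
The paper does not prove this lemma itself; it cites Fazel's thesis (Theorem 1, Sec.~5.1.5) for the proof, and that cited proof is exactly your route: compute the Fenchel conjugate of the rank restricted to the spectral-norm ball, then the biconjugate, using von Neumann's trace inequality to reduce both conjugations to separable scalar problems over singular values. Your computation is correct, including the per-index case analysis yielding $f^{*}\left(\mathbf{Y}\right)=\sum_i\left(\sigma_i\left(\mathbf{Y}\right)-1\right)_+$ and $f^{**}=\Vert\cdot\Vert_*$ on the ball, so this matches the standard argument the paper relies on.
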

This lemma justifies the heuristic to use the nuclear norm as a surrogate for the rank. Much of recent work, e.g. low-rank matrix completion~\cite{candes2009exact} and Robust Principal Component Analysis (RPCA)~\cite{candes2009robust}, proves theoretically under mild conditions, the optimization can be exactly equivalent. We will borrow heavily the idea of this surrogate, and build on the theoretical underpinnings to develop our formulation and analysis of LRR-PSD/LRR for subspace segmentation.

\subsection{Subspace Segmentation with Clean Data -- An Amazing Equivalence} \label{sec:overlap}
To tackle the subspace segmentation problem, Liu et al~\cite{liu2010robust} have proposed to learn the affinity matrix for SC via solving the following rank minimization problem
\begin{equation}
\mbox{(LRANK)} \quad \quad  \min. \; \rank\left(\mathbf{Z}\right), \; \text{s.t.} \; \mathbf{X}=\mathbf{XZ}.
\end{equation}
As a convex surrogate, the rank objective is replaced by the nuclear norm, and hence the formulation
\begin{equation}
\mbox{(LRR)} \quad \quad \quad \min. \; \Vert\mathbf{Z}\Vert_*, \; \text{s.t.} \; \mathbf{X}=\mathbf{XZ}.
\end{equation}
Instead, we advocate to solve the problem incorporating the positive semidefinite constraint directly to produce a valid kernel directly as argued above
\begin{equation}
\mbox{(LRR-PSD)} \quad \min. \; \Vert\mathbf{Z}\Vert_*, \; \text{s.t.} \; \mathbf{X}=\mathbf{XZ}, \mathbf{Z}\succeq \mathbf{0}.
\end{equation}
Liu et al~\cite{liu2010robust} has established one important characterization about solution(s) to \textbf{LRR} for $\mathbf{X}\in \mathbb{R}^{d\times n}$ and $\mathbf{Z}\in \mathbb{R}^{n\times n}$, provided \emph{the data have been arranged by their respective groups, i.e., the true segmentation}.
\begin{theorem}[\cite{liu2010robust}, Theorem 3.1]\label{thm:liu}
Assume the data sampling is sufficient such that $n_i>\rank\left(\mathbf{X}_i\right)=d_i$ and the data have been ordered by group. If the subspaces are independent then there exists an optimal solution $\mathbf{Z}^*$ to problem \textbf{LRR} that is block-diagonal:
\begin{equation}\label{eq:opt_z}
\mathbf{Z}^*_{n\times n} =
\begin{bmatrix}
  \mathbf{Z}_1^* & \mathbf{0}      & \mathbf{0} & \mathbf{0} \\
  \mathbf{0}     & \mathbf{Z}_2^*  & \mathbf{0} & \mathbf{0} \\
  \mathbf{0}     & \mathbf{0}      & \ddots     & \mathbf{0} \\
  \mathbf{0}     & \mathbf{0}      & \mathbf{0} & \mathbf{Z}_k^* \\
\end{bmatrix}
\end{equation}
with $\mathbf{Z}_i^*$ being an $n_i\times n_i$ matrix with $\rank\left(\mathbf{Z}_i^*\right)=d$, $\forall i$.
\end{theorem}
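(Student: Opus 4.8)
The plan is to prove existence constructively: exhibit one explicit block-diagonal feasible matrix whose nuclear norm equals a lower bound valid for \emph{every} feasible $\mathbf{Z}$, forcing it to be a global minimizer. Write $\mathbf{X}=\left[\mathbf{X}_1,\dots,\mathbf{X}_k\right]$, with $\mathbf{X}_i\in\mathbb{R}^{d\times n_i}$ the columns of group $i$, and partition any feasible $\mathbf{Z}$ conformably into blocks $\mathbf{Z}_{ij}\in\mathbb{R}^{n_i\times n_j}$. For each group take the skinny SVD $\mathbf{X}_i=\mathbf{U}_i\mathbf{\Sigma}_i\mathbf{V}_i^\top$ (so $\mathbf{\Sigma}_i$ is $d_i\times d_i$ and invertible) and let $\mathbf{P}_i=\mathbf{V}_i\mathbf{V}_i^\top$ be the orthogonal projector onto the row space of $\mathbf{X}_i$; the candidate is $\mathbf{Z}^\star=\Diag\!\left(\mathbf{P}_1,\dots,\mathbf{P}_k\right)$.

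The first and decisive step is a feasibility reduction, and it is precisely here that subspace independence is used. Fixing a column $j$ of group $p$, the constraint $\mathbf{X}=\mathbf{XZ}$ reads $\mathbf{x}_j=\sum_{i}\mathbf{X}_i\left(\mathbf{z}_j\right)_i$, where $\left(\mathbf{z}_j\right)_i$ gathers the entries of that column indexed by group $i$. Since $\mathbf{x}_j\in\mathbf{S}_p$ while each $\mathbf{X}_i\left(\mathbf{z}_j\right)_i\in\mathbf{S}_i$, rearranging exhibits a vector of $\mathbf{S}_p$ equal to a vector of $\sum_{i\neq p}\mathbf{S}_i$; directness of the sum $\bigoplus_i\mathbf{S}_i$ forces every cross term with $i\neq p$ to vanish. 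Collecting columns, $\mathbf{X}$ annihilates the off-diagonal blocks of $\mathbf{Z}$, so its diagonal blocks already satisfy $\mathbf{X}_i\mathbf{Z}_{ii}=\mathbf{X}_i$ for all $i$; left-multiplying by $\mathbf{\Sigma}_i^{-1}\mathbf{U}_i^\top$ and then by $\mathbf{V}_i$ rewrites this as $\mathbf{P}_i\mathbf{Z}_{ii}=\mathbf{P}_i$.

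The lower bound now follows from the duality inequality $\langle\mathbf{Y},\mathbf{Z}\rangle\le\Vert\mathbf{Y}\Vert_2\Vert\mathbf{Z}\Vert_*$ recorded just after Lemma~\ref{lemma:duality}. Taking the certificate $\mathbf{Y}=\mathbf{Z}^\star$, which has operator norm $1$ as a block-diagonal orthogonal projector, only the diagonal blocks survive in the trace, so $\langle\mathbf{Y},\mathbf{Z}\rangle=\sum_i\trace\!\left(\mathbf{P}_i\mathbf{Z}_{ii}\right)=\sum_i\trace\!\left(\mathbf{P}_i\right)=\sum_i d_i$, using $\mathbf{P}_i\mathbf{Z}_{ii}=\mathbf{P}_i$. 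Hence $\Vert\mathbf{Z}\Vert_*\ge\sum_i d_i$ for every feasible $\mathbf{Z}$. Conversely $\mathbf{Z}^\star$ is feasible, because $\mathbf{X}_i\mathbf{P}_i=\mathbf{X}_i$, and $\Vert\mathbf{Z}^\star\Vert_*=\sum_i\Vert\mathbf{P}_i\Vert_*=\sum_i d_i$ since each $\mathbf{P}_i$ is a rank-$d_i$ projector. Thus $\mathbf{Z}^\star$ attains the bound and is a block-diagonal global minimizer whose $i$-th block has $\rank\!\left(\mathbf{P}_i\right)=d_i=\rank\!\left(\mathbf{X}_i\right)$, which is the claim (the ``$d$'' in the statement should read $d_i$).

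I expect the feasibility reduction to be the main obstacle, since it is the only place the geometric hypothesis does real work: one must show that a point's self-expression cannot legitimately recruit samples from other, independent subspaces, so that discarding the off-diagonal blocks costs nothing against the constraint. The remaining ingredients---using the block-diagonal projector simultaneously as dual certificate and as the norm-attaining solution---are then routine given the duality bound already in the excerpt, the only care being to note the $d\mapsto d_i$ typo and that $\mathbf{P}_i$ is the row-space projector realizing rank exactly $d_i$.
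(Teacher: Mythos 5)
Your proposal is correct, but it takes a genuinely different route from the paper --- in fact the paper never proves this statement at all: it is imported verbatim from Liu et al.~\cite{liu2010robust}, and the paper's own machinery runs in the opposite direction. In Theorem~\ref{thm:our_unique} the authors pick $r$ independent rows of $\mathbf{X}$, apply the QR-based similarity transform of Lemma~\ref{lemma:schur-lemma} to obtain $\mathbf{T}_{11}=\mathbf{I}_r$, and invoke the partitioned-matrix inequality of Lemma~\ref{lemma:norm-red} to force $\mathbf{T}_{12}=\mathbf{T}_{22}=\mathbf{0}$, identifying the \emph{unique} minimizer $\mathbf{Z}^*=\mathbf{U}\mathbf{U}^\top$ as the orthogonal projector onto the row space of the whole $\mathbf{X}$ --- an argument that makes no use of subspace independence; block-diagonality under independence is then recovered in a corollary by citing Theorem~\ref{thm:liu} itself together with uniqueness, rather than being proved from scratch. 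Your argument instead proves the cited theorem directly: the candidate $\Diag\left(\mathbf{P}_1,\dots,\mathbf{P}_k\right)$, the feasibility reduction via the direct-sum decomposition (which is exactly where independence does its work, and is sound: $\mathbf{S}_p\cap\sum_{i\neq p}\mathbf{S}_i=\left\{\mathbf{0}\right\}$ kills the cross terms, giving $\mathbf{X}_i\mathbf{Z}_{ii}=\mathbf{X}_i$ and hence $\mathbf{P}_i\mathbf{Z}_{ii}=\mathbf{P}_i$ for every feasible $\mathbf{Z}$), and the dual certificate $\mathbf{Y}=\mathbf{Z}^\star$ with $\Vert\mathbf{Y}\Vert_2=1$ yielding the matching lower bound $\sum_i d_i$ --- all steps check out, and your reading of the ``$d$'' in the statement as $d_i$ is right. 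What each approach buys: yours is self-contained and makes the role of independence transparent, but it delivers only \emph{existence} of a block-diagonal minimizer, not uniqueness; the paper's route delivers uniqueness and a closed form without assuming independence, but leaves the block-diagonal structure resting on an external citation. The two are complementary: since the minimizer is unique, your $\mathbf{Z}^\star$ must coincide with $\mathbf{U}\mathbf{U}^\top$, i.e., under independence the row-space projector of $\mathbf{X}$ is itself block-diagonal (the classical shape-interaction-matrix fact), and your optimal value $\sum_i d_i$ agrees with the paper's characterization of the optimal objective as $\rank\left(\mathbf{X}\right)$.
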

This observation is critical to good segmentation since affinity matrices with block diagonal structure (for sorted data as stated) favor perfect segmentation as revealed by theoretic analysis of SC algorithms (e.g., refer to \cite{von2007tutorial} for brief exposition). There are, however, discoveries that are equally important yet to make. We will next state somewhat surprising results that we have derived, complementing Theorem~\ref{thm:liu} and providing critical insights in characterizing the (identical and unique) solution to LRR-PSD and LRR.
\begin{theorem}\label{thm:our_unique}
Optimization problem \textbf{LRR} has a unique minimizer $\mathbf{Z}^*$. Moreover there exists an orthogonal matrix $\mathbf{Q}\in \mathbb{R}^{n\times n}$ such that
\begin{equation}
\mathbf{Q}^\top \mathbf{Z}^{*}\mathbf{Q} =\begin{bmatrix}
\mathbf{I}_r  & \mathbf{0} \\
\mathbf{0}    & \mathbf{0}
\end{bmatrix}
\end{equation}
where $r=\rank\left(\mathbf{X}\right)$, and obviously $\mathbf{Z}^*\succeq \mathbf{0}$.
\end{theorem}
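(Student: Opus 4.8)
The plan is to reduce LRR to a transparent block-structured problem by diagonalizing its feasible set through the SVD of $\mathbf{X}$, and then to combine a single fixed dual certificate with an elementary symmetrization argument to pin down the minimizer together with its uniqueness. First I would rewrite the constraint. Let $r=\rank\left(\mathbf{X}\right)$ and write the compact SVD $\mathbf{X}=\mathbf{U}\mathbf{S}\mathbf{V}^\top$ with $\mathbf{V}\in\mathbb{R}^{n\times r}$ collecting the right singular vectors belonging to the nonzero singular values. Left-multiplying $\mathbf{X}=\mathbf{XZ}$ by $\mathbf{S}^{-1}\mathbf{U}^\top$ shows the constraint is equivalent to $\mathbf{V}^\top\mathbf{Z}=\mathbf{V}^\top$. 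Completing $\mathbf{V}$ to an orthogonal $\mathbf{Q}=\left[\mathbf{V},\mathbf{V}_\perp\right]$ and setting $\mathbf{M}:=\mathbf{Q}^\top\mathbf{Z}\mathbf{Q}$, feasibility becomes exactly
\[
\mathbf{M}=\begin{bmatrix}\mathbf{I}_r & \mathbf{0}\\ \mathbf{B} & \mathbf{C}\end{bmatrix},\qquad \mathbf{B},\mathbf{C}\ \text{arbitrary}.
\]
Since the nuclear norm is a Schatten-$1$ norm and hence unitarily invariant, $\Vert\mathbf{Z}\Vert_*=\Vert\mathbf{M}\Vert_*$, so LRR is equivalent to minimizing $\Vert\mathbf{M}\Vert_*$ over this two-block family, and $\mathbf{Q}$ is already the orthogonal matrix named in the statement.

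Second, I would establish the lower bound and a minimizer simultaneously. Taking the fixed matrix $\mathbf{Y}_0=\begin{bmatrix}\mathbf{I}_r & \mathbf{0}\\ \mathbf{0} & \mathbf{0}\end{bmatrix}$, which satisfies $\Vert\mathbf{Y}_0\Vert_2=1$, the duality inequality $\langle\mathbf{Y}_0,\mathbf{M}\rangle\leq\Vert\mathbf{Y}_0\Vert_2\,\Vert\mathbf{M}\Vert_*$ from Lemma~\ref{lemma:duality} yields $\Vert\mathbf{M}\Vert_*\geq\langle\mathbf{Y}_0,\mathbf{M}\rangle=\trace\left(\mathbf{I}_r\right)=r$ for every feasible $\mathbf{M}$. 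The choice $\mathbf{B}=\mathbf{C}=\mathbf{0}$, i.e. $\mathbf{Z}^*=\mathbf{V}\mathbf{V}^\top$, is feasible and attains the value $r$; this proves existence, identifies the minimizer as the orthogonal projector onto the row space of $\mathbf{X}$, and exhibits both the required $\mathbf{Q}$ and the evident relation $\mathbf{Z}^*\succeq\mathbf{0}$, so the same matrix also solves LRR-PSD.

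Third, and this is where the real work lies, I would prove uniqueness. Because the nuclear norm is not strictly convex, tightness of the dual pairing does not by itself single out $\mathbf{M}$, so I must show directly that $\Vert\mathbf{M}\Vert_*=r$ forces $\mathbf{B}=\mathbf{C}=\mathbf{0}$. My plan uses two elementary stages. For the first, I symmetrize with the orthogonal sign matrix $\mathbf{D}=\begin{bmatrix}\mathbf{I}_r & \mathbf{0}\\ \mathbf{0} & -\mathbf{I}_{n-r}\end{bmatrix}$: a direct computation gives $\frac{1}{2}\left(\mathbf{M}+\mathbf{D}\mathbf{M}\mathbf{D}\right)=\begin{bmatrix}\mathbf{I}_r & \mathbf{0}\\ \mathbf{0} & \mathbf{C}\end{bmatrix}$, and the triangle inequality together with $\Vert\mathbf{D}\mathbf{M}\mathbf{D}\Vert_*=\Vert\mathbf{M}\Vert_*$ gives $r+\Vert\mathbf{C}\Vert_*\leq\Vert\mathbf{M}\Vert_*$, whence $\Vert\mathbf{M}\Vert_*=r$ forces $\mathbf{C}=\mathbf{0}$. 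For the second stage, with $\mathbf{C}=\mathbf{0}$ the nonzero singular values of $\begin{bmatrix}\mathbf{I}_r & \mathbf{0}\\ \mathbf{B} & \mathbf{0}\end{bmatrix}$ are $\sqrt{1+\mu_i}$, where $\left\{\mu_i\right\}\geq 0$ are the eigenvalues of $\mathbf{B}^\top\mathbf{B}$, so $\Vert\mathbf{M}\Vert_*=\sum_i\sqrt{1+\mu_i}\geq r$ with equality iff every $\mu_i=0$, i.e. $\mathbf{B}=\mathbf{0}$. Thus the feasible minimizer is unique and equals $\mathbf{V}\mathbf{V}^\top$.

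The main obstacle I anticipate is exactly this uniqueness claim, precisely because the nuclear norm lacks strict convexity and the optimal value is attained on what could a priori be a whole face of the feasible polytope. The crux is the symmetrization identity, which serves as an elementary, self-contained substitute for the general pinching inequality: it cleanly decouples the off-diagonal block $\mathbf{B}$ from the corner block $\mathbf{C}$, so that $\mathbf{C}$ is eliminated first by convexity and $\mathbf{B}$ afterward by a direct singular-value count.
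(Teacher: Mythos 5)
Your proof is correct, and it takes a genuinely different route from the paper's. The paper obtains the block structure of a feasible solution by applying Golub--van Loan's Lemma 7.1.2 (Lemma~\ref{lemma:schur-lemma}) to a QR factorization of $r$ independent rows of $\mathbf{X}$, giving $\mathbf{Z}^\top$ an upper-triangular block form with $\mathbf{T}_{11}=\mathbf{I}_r$; it then uses a single tool for both optimality and uniqueness, namely the partitioned-matrix inequality $\Vert \mathbf{A}\Vert_* \geq \Vert \mathbf{A}_1\Vert_*$ with equality iff $\mathbf{A}_2=\mathbf{0}$ (Lemma~\ref{lemma:norm-red}): once to force $\mathbf{T}_{12}=\mathbf{T}_{22}=\mathbf{0}$ at optimality, and once more, in strict form, inside a perturbation argument ($\mathbf{Z}'=\mathbf{Z}^*+\mathbf{H}$ with $\mathbf{H}^\top\mathbf{U}=\mathbf{0}$) to rule out any nonzero $\mathbf{H}$. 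You do three things differently: (i) you reduce through the compact SVD of $\mathbf{X}$, which parameterizes the \emph{entire} feasible set at once as $\mathbf{Q}^\top\mathbf{Z}\mathbf{Q}=\bigl[\begin{smallmatrix}\mathbf{I}_r & \mathbf{0}\\ \mathbf{B} & \mathbf{C}\end{smallmatrix}\bigr]$ with $\mathbf{B},\mathbf{C}$ free; (ii) you get the lower bound $\Vert\mathbf{Z}\Vert_*\geq r$ from the operator-norm/nuclear-norm duality (Lemma~\ref{lemma:duality}) via the fixed certificate $\mathbf{Y}_0$; and (iii) you prove uniqueness by the pinching identity $\tfrac{1}{2}(\mathbf{M}+\mathbf{D}\mathbf{M}\mathbf{D})$ plus block-diagonal additivity of the nuclear norm (Lemma~\ref{lem:norm_ine}) to eliminate $\mathbf{C}$, followed by the explicit singular-value count $\Vert\mathbf{M}\Vert_*=\sum_i\sqrt{1+\mu_i}$ to eliminate $\mathbf{B}$. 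All of these steps check out: the reduction $\mathbf{V}^\top\mathbf{Z}=\mathbf{V}^\top$ is an exact equivalence, the pinching computation is right, and your $\mathbf{Z}^*=\mathbf{V}\mathbf{V}^\top$ coincides with the paper's $\mathbf{U}\mathbf{U}^\top$, both being the orthogonal projector onto the row space of $\mathbf{X}$. What the paper's route buys is economy of tools — one lemma with its built-in equality characterization powers both optimality and uniqueness — at the price of importing Stewart--Sun's Theorem 3.14 as a black box. What your route buys is self-containedness and globality: the dual certificate and the $\sqrt{1+\mu_i}$ computation are elementary, and because the feasible set is explicitly parameterized, uniqueness becomes a direct statement about the whole family rather than a contradiction about perturbations of one candidate solution; indeed your stage (iii) is in effect an elementary re-proof of exactly the special case of Lemma~\ref{lemma:norm-red} that the theorem needs. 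One cosmetic slip in your closing discussion: the feasible set is an affine subspace, not a polytope.
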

Three important corollaries follow immediately from Theorem~\ref{thm:our_unique}.
\begin{corollary}[LRR-PSD/LRR Equivalence]
The LRR problem and LRR-PSD problem are exactly equivalent, i.e. with identical unique minimizers that are symmetric positive semidefinite.
\end{corollary}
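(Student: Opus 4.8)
The plan is to obtain the corollary directly from Theorem~\ref{thm:our_unique} by a feasible-set containment argument, so that no fresh computation is required once that theorem is in hand. The one fact I would extract from it is that the LRR minimizer $\mathbf{Z}^*$, being orthogonally similar to $\begin{bmatrix}\mathbf{I}_r & \mathbf{0}\\ \mathbf{0}&\mathbf{0}\end{bmatrix}$, is symmetric and positive semidefinite: writing $\mathbf{Z}^*=\mathbf{Q}\left[\begin{smallmatrix}\mathbf{I}_r&\mathbf{0}\\\mathbf{0}&\mathbf{0}\end{smallmatrix}\right]\mathbf{Q}^\top$ makes both properties manifest, so in particular $\mathbf{Z}^*\succeq\mathbf{0}$. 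This is the only place the heavy machinery of the theorem enters.

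First I would observe that the LRR-PSD feasible set is exactly the LRR feasible set intersected with the cone $\mathcal{S}^n_+$, and is therefore contained in it. Minimizing the common objective $\|\mathbf{Z}\|_*$ over a smaller set can only raise the optimal value, so the LRR optimum is $\leq$ the LRR-PSD optimum. For the reverse inequality I would invoke the positive semidefiniteness just noted: $\mathbf{Z}^*$ is feasible for LRR-PSD, whence the LRR-PSD optimum is $\leq\|\mathbf{Z}^*\|_*$, which equals the LRR optimum. The two optimal values thus coincide and $\mathbf{Z}^*$ is simultaneously a minimizer of LRR-PSD.

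The remaining, and only mildly delicate, step is transferring uniqueness. The hard part will be ruling out that LRR-PSD admits some other minimizer $\tilde{\mathbf{Z}}\neq\mathbf{Z}^*$; I would argue that any such $\tilde{\mathbf{Z}}$ is positive semidefinite and hence LRR-feasible, and attains nuclear norm equal to the shared optimal value, so it is also an LRR minimizer. Since Theorem~\ref{thm:our_unique} asserts the LRR minimizer is \emph{unique}, this forces $\tilde{\mathbf{Z}}=\mathbf{Z}^*$. Consequently the two problems possess the identical, unique, symmetric positive semidefinite minimizer $\mathbf{Z}^*$, which is precisely the claimed equivalence. The entire substance rests on the uniqueness and spectral characterization already supplied by the theorem; the corollary itself reduces to this containment-plus-uniqueness-transfer argument.
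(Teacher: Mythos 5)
Your proposal is correct and follows essentially the same route as the paper: the paper's own proof is the one-liner that ``$\mathbf{Z}^*$ in Theorem~\ref{thm:our_unique} naturally obeys LRR-PSD,'' which implicitly rests on exactly the feasible-set containment and uniqueness transfer you make explicit. Your write-up simply fills in the details the paper leaves to the reader, with no substantive difference in approach.
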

\begin{proof}
$\mathbf{Z}^*$ in Theorem~\ref{thm:our_unique} naturally obeys LRR-PSD.
\end{proof}
\begin{corollary}
Assume the setting in Theorem~\ref{thm:liu}. The optimal solution $\mathbf{Z}^*$ to problem \textbf{LRR} and \textbf{LRR-PSD} are block-diagonal as in Eq.~\eqref{eq:opt_z}.
\end{corollary}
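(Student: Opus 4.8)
The plan is to harvest the block-diagonal structure directly from the two results already in hand, without re-deriving any spectral facts: the key observation is that once uniqueness is available, the existential statement ``there is a block-diagonal optimizer'' upgrades automatically to ``the optimizer is block-diagonal.'' So the whole strategy is to intersect Theorem~\ref{thm:our_unique} with Theorem~\ref{thm:liu}.

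First I would invoke Theorem~\ref{thm:our_unique}: problem \textbf{LRR} admits a \emph{unique} minimizer $\mathbf{Z}^*$, and by the Equivalence Corollary this same matrix is simultaneously the unique minimizer of \textbf{LRR-PSD}. Hence the two problems share one and the same optimal point, and it suffices to establish the claim for \textbf{LRR} alone.

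Next I would bring in Theorem~\ref{thm:liu}. Under the hypotheses inherited in this corollary (sufficient sampling $n_i>\rank\left(\mathbf{X}_i\right)$, data ordered by group, and independent subspaces), that theorem guarantees the existence of \emph{some} optimal solution of \textbf{LRR} having the block-diagonal form of Eq.~\eqref{eq:opt_z}; call it $\widetilde{\mathbf{Z}}$. Since $\widetilde{\mathbf{Z}}$ attains the minimum nuclear norm and, by Theorem~\ref{thm:our_unique}, the minimizer is unique, we are forced to conclude $\widetilde{\mathbf{Z}}=\mathbf{Z}^*$. Therefore the common minimizer of \textbf{LRR} and \textbf{LRR-PSD} is itself block-diagonal as in Eq.~\eqref{eq:opt_z}, which is exactly the assertion.

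This argument carries essentially no obstacle of its own; all the difficulty has been front-loaded into the uniqueness clause of Theorem~\ref{thm:our_unique}, which is doing the entire work. The single point I would still verify for internal consistency is that the global rank matches the sum of block ranks: Theorem~\ref{thm:our_unique} makes $\mathbf{Z}^*$ a rank-$r$ orthogonal projector with $r=\rank\left(\mathbf{X}\right)$, while Theorem~\ref{thm:liu} assigns each block $\mathbf{Z}_i^*$ rank $d_i$; independence of the subspaces gives $\rank\left(\mathbf{X}\right)=\sum_i d_i$, so the two counts agree. Feasibility $\mathbf{X}=\mathbf{X}\mathbf{Z}^*$ then pins $\mathbf{Z}^*$ down as the orthogonal projector onto the row space of $\mathbf{X}$, which under the grouped ordering splits into the block-diagonal projectors $\Diag\left(\mathbf{Z}_1^*,\dots,\mathbf{Z}_k^*\right)$ of Eq.~\eqref{eq:opt_z}, re-confirming $\mathbf{Z}^*\succeq\mathbf{0}$ block by block.
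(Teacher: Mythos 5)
Your proof is correct and follows essentially the same route as the paper: the paper's own one-line proof invokes the LRR-PSD/LRR equivalence (i.e., the unique common minimizer from Theorem~\ref{thm:our_unique}) together with Theorem~\ref{thm:liu}, so that the block-diagonal optimizer whose existence Liu et al.\ guarantee must coincide with the unique minimizer. Your write-up simply makes this uniqueness-plus-existence argument explicit, and the closing rank-consistency check is a harmless bonus not required for the conclusion.
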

\begin{proof}
Follow directly from LRR-PSD/LRR equivalence and Theorem~\ref{thm:liu}.
\end{proof}
\begin{corollary}[LRR-PSD/LRR/LRANK Equivalence]
The optimal rank of $\mathbf{Z}$ in \textbf{LRANK} is the objective value obtained from \textbf{LRR-PSD/LRR}, i.e., $\rank\left(\mathbf{X}\right)$.
\end{corollary}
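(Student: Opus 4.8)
The plan is to pin the optimal value of \textbf{LRANK} at $r := \rank\left(\mathbf{X}\right)$ by a two-sided bound, and then to verify that this integer coincides with the nuclear-norm objective value attained by \textbf{LRR-PSD/LRR}.

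First, for the lower bound, I would note that every $\mathbf{Z}$ feasible for \textbf{LRANK} obeys $\mathbf{X}=\mathbf{XZ}$, whence $\rank\left(\mathbf{X}\right)=\rank\left(\mathbf{XZ}\right)\leq \rank\left(\mathbf{Z}\right)$ by sub-multiplicativity of rank under products. Thus no feasible $\mathbf{Z}$ can have rank below $r$, and the \textbf{LRANK} optimum is at least $r$.

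Second, for the upper bound, I would exhibit the minimizer $\mathbf{Z}^*$ of Theorem~\ref{thm:our_unique} as a feasible witness. Being the \textbf{LRR} minimizer, $\mathbf{Z}^*$ satisfies $\mathbf{X}=\mathbf{XZ}^*$ and is therefore \textbf{LRANK}-feasible; its canonical form $\mathbf{Q}^\top\mathbf{Z}^*\mathbf{Q}$ displayed in Theorem~\ref{thm:our_unique} has exactly $r$ nonzero entries on the diagonal, so $\rank\left(\mathbf{Z}^*\right)=r$. Because the rank objective is integer valued it attains its infimum, and combining the two bounds fixes the \textbf{LRANK} optimum exactly at $r=\rank\left(\mathbf{X}\right)$.

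Finally, to close the three-way equality I would evaluate the nuclear norm at $\mathbf{Z}^*$. Since $\mathbf{Z}^*$ is orthogonally similar to the displayed idempotent block matrix and is positive semidefinite (Theorem~\ref{thm:our_unique}), its singular values equal its eigenvalues---$r$ ones and $n-r$ zeros---so $\Vert \mathbf{Z}^*\Vert_*=r$. This is precisely the \textbf{LRR-PSD/LRR} objective value, which therefore agrees with the \textbf{LRANK} optimum and with $\rank\left(\mathbf{X}\right)$. I do not foresee a genuine obstacle: the statement is a short corollary whose only delicate point is recognizing that the nuclear-norm minimizer of Theorem~\ref{thm:our_unique} is simultaneously a rank minimizer, which here follows from the elementary sub-multiplicativity bound meeting the explicit rank-$r$ witness rather than from any deeper exactness property of the convex relaxation.
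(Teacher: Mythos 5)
Your proof is correct and follows essentially the same route as the paper: a lower bound forced by the constraint $\mathbf{X}=\mathbf{XZ}$, the minimizer $\mathbf{Z}^*$ of Theorem~\ref{thm:our_unique} as a rank-$r$ feasible witness, and evaluation of $\Vert\mathbf{Z}^*\Vert_*$ via its spectrum and unitary invariance. The only differences are cosmetic: you make the lower bound self-contained through the elementary product inequality $\rank\left(\mathbf{XZ}\right)\leq\rank\left(\mathbf{Z}\right)$ (which, as a minor terminological quibble, is the rank-of-a-product bound rather than ``sub-multiplicativity'') where the paper simply defers to the proof of Theorem~\ref{thm:our_unique}, which establishes the same fact through the block structure of $\mathbf{T}$.
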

\begin{proof}
Proof of Theorem~\ref{thm:our_unique} (later) shows $\rank\left(\mathbf{Z}\right)$ cannot be lower than $\rank\left(\mathbf{X}\right)$ due to the constraint $\mathbf{X}=\mathbf{XZ}$. $\rank\left(\mathbf{X}\right)$ is the optimal objective value for nuclear norm of $\mathbf{Z}$ since $\Vert \mathbf{Z}^*\Vert_*=\Vert \mathbf{Q}^{\top}\mathbf{Z}^*\mathbf{Q}\Vert_*=\rank\left(\mathbf{X}\right)$.
\end{proof}

The development of the results in Theorem~\ref{thm:our_unique} will testify the beautiful interplay between classic matrix computation and properties of nuclear norms we reviewed above. We present and validate several critical technical results preceding formal presentation of our proof.
\begin{lemma}[\cite{golub1996matrix}, Lemma 7.1.2 on Real Matrices] \label{lemma:schur-lemma}
If $\mathbf{A}\in \mathbb{R}^{n\times n}$, $\mathbf{B}\in \mathbb{R}^{p\times p}$, and $\mathbf{M}\in \mathbb{R}^{n\times p}$ satisfy
\begin{equation} \label{eq:ammb}
\mathbf{AM = MB}, \; \; \rank \left(\mathbf{M}\right) = p,
\end{equation}
then there exists an orthogonal $\mathbf{Q} \in \mathbb{R}^{n\times n}$ such that
\begin{equation}
\mathbf{Q}^\top \mathbf{AQ} = \mathbf{T} =
\begin{bmatrix}
\mathbf{T}_{11} & \mathbf{T}_{12} \\
\mathbf{0} & \mathbf{T}_{22}
\end{bmatrix}
\end{equation}
for $\mathbf{T}\in\mathbb{R}^{n \times n}$, $\mathbf{T}_{11}\in \mathbb{R}^{p\times p}$ and $\mathbf{T}_{12}$, $\mathbf{0}$ and $\mathbf{T}_{22}$ of compatible dimensions. Furthermore,  $\bm\lambda\left(\mathbf{T}_{11}\right) = \bm\lambda\left(\mathbf{A}\right) \cap \bm\lambda \left(\mathbf{B}\right)$.\footnote{We follow the convention in Golub and van Loan~\cite{golub1996matrix} and use $\bm\lambda\left(\cdot\right)$ to denote the set of eigenvalues counting multiplicity. Hence it is not a normal set, and use of set operators here abuses their traditional definitions.}
\end{lemma}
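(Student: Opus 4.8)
The plan is to recognize that the hypothesis $\mathbf{AM}=\mathbf{MB}$ together with $\rank\left(\mathbf{M}\right)=p$ encodes precisely the statement that the column space of $\mathbf{M}$ is a $p$-dimensional $\mathbf{A}$-invariant subspace, and then to triangularize $\mathbf{A}$ against an orthonormal basis adapted to this subspace. First I would observe that for every $\mathbf{v}\in\mathbb{R}^p$ we have $\mathbf{A}\left(\mathbf{Mv}\right)=\left(\mathbf{AM}\right)\mathbf{v}=\left(\mathbf{MB}\right)\mathbf{v}=\mathbf{M}\left(\mathbf{Bv}\right)$, so $\mathbf{A}$ maps $\mathrm{range}\left(\mathbf{M}\right)$ into itself; since $\mathbf{M}$ has full column rank $p$, this range is exactly $p$-dimensional.

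Next I would manufacture the orthogonal matrix $\mathbf{Q}$. Taking a thin QR factorization $\mathbf{M}=\mathbf{Q}_1\mathbf{R}$ with $\mathbf{Q}_1\in\mathbb{R}^{n\times p}$ having orthonormal columns and $\mathbf{R}\in\mathbb{R}^{p\times p}$ upper triangular, full column rank of $\mathbf{M}$ forces $\mathbf{R}$ to be invertible and $\mathrm{range}\left(\mathbf{Q}_1\right)=\mathrm{range}\left(\mathbf{M}\right)$. I would then extend $\mathbf{Q}_1$ to an orthonormal basis of $\mathbb{R}^n$ by appending $\mathbf{Q}_2\in\mathbb{R}^{n\times\left(n-p\right)}$, so that $\mathbf{Q}=[\mathbf{Q}_1,\mathbf{Q}_2]$ is orthogonal and $\mathbf{Q}_2^\top\mathbf{Q}_1=\mathbf{0}$. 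The invariance established above means $\mathbf{A}\mathbf{Q}_1=\mathbf{Q}_1\mathbf{T}_{11}$ for some $\mathbf{T}_{11}\in\mathbb{R}^{p\times p}$ (explicitly $\mathbf{T}_{11}=\mathbf{Q}_1^\top\mathbf{A}\mathbf{Q}_1$, obtained by left-multiplying by $\mathbf{Q}_1^\top$). Partitioning $\mathbf{T}=\mathbf{Q}^\top\mathbf{AQ}$ into its four blocks then yields the claimed structure immediately: the lower-left block is $\mathbf{Q}_2^\top\mathbf{A}\mathbf{Q}_1=\mathbf{Q}_2^\top\mathbf{Q}_1\mathbf{T}_{11}=\mathbf{0}$, while $\mathbf{T}_{12}=\mathbf{Q}_1^\top\mathbf{A}\mathbf{Q}_2$ and $\mathbf{T}_{22}=\mathbf{Q}_2^\top\mathbf{A}\mathbf{Q}_2$ remain unconstrained.

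For the spectral identity I would substitute $\mathbf{M}=\mathbf{Q}_1\mathbf{R}$ into $\mathbf{AM}=\mathbf{MB}$ to get $\mathbf{A}\mathbf{Q}_1\mathbf{R}=\mathbf{Q}_1\mathbf{R}\mathbf{B}$, and combine this with $\mathbf{A}\mathbf{Q}_1=\mathbf{Q}_1\mathbf{T}_{11}$ plus left-multiplication by $\mathbf{Q}_1^\top$ to obtain $\mathbf{T}_{11}\mathbf{R}=\mathbf{R}\mathbf{B}$, i.e. $\mathbf{T}_{11}=\mathbf{R}\mathbf{B}\mathbf{R}^{-1}$. Thus $\mathbf{T}_{11}$ is similar to $\mathbf{B}$, so $\bm\lambda\left(\mathbf{T}_{11}\right)=\bm\lambda\left(\mathbf{B}\right)$. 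Because $\mathbf{T}$ is block upper triangular and similar to $\mathbf{A}$, its characteristic polynomial factors as $\det\left(\lambda\mathbf{I}-\mathbf{T}_{11}\right)\det\left(\lambda\mathbf{I}-\mathbf{T}_{22}\right)$, so $\bm\lambda\left(\mathbf{T}_{11}\right)\subseteq\bm\lambda\left(\mathbf{A}\right)$ as multisets. Combining the two facts gives $\bm\lambda\left(\mathbf{T}_{11}\right)=\bm\lambda\left(\mathbf{B}\right)=\bm\lambda\left(\mathbf{A}\right)\cap\bm\lambda\left(\mathbf{B}\right)$, which is the assertion.

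The main obstacle I anticipate is not the triangularization, which follows mechanically from the invariant-subspace observation, but rather handling the eigenvalue statement with correct bookkeeping of algebraic multiplicities. The cited convention treats $\bm\lambda\left(\cdot\right)$ as a multiset, so I must verify that the similarity $\mathbf{T}_{11}=\mathbf{R}\mathbf{B}\mathbf{R}^{-1}$ preserves multiplicities (it does, being a genuine similarity transform) and that the factorization of the characteristic polynomial legitimately furnishes the multiset inclusion $\bm\lambda\left(\mathbf{T}_{11}\right)\subseteq\bm\lambda\left(\mathbf{A}\right)$; only then does identifying the ``intersection'' with $\bm\lambda\left(\mathbf{B}\right)$ become rigorous rather than merely set-theoretic. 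A secondary point to check is the invertibility of $\mathbf{R}$, which I would justify by $\rank\left(\mathbf{R}\right)=\rank\left(\mathbf{Q}_1\mathbf{R}\right)=\rank\left(\mathbf{M}\right)=p$ since $\mathbf{Q}_1$ has orthonormal columns.
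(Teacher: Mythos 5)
Your proposal is correct and follows essentially the same route as the paper's proof: a QR factorization $\mathbf{M}=\mathbf{Q}_1\mathbf{R}$ with $\mathbf{R}$ nonsingular, the vanishing lower-left block of $\mathbf{Q}^\top\mathbf{A}\mathbf{Q}$, the similarity $\mathbf{T}_{11}=\mathbf{R}\mathbf{B}\mathbf{R}^{-1}$, and the block-triangular spectrum decomposition $\bm\lambda\left(\mathbf{A}\right)=\bm\lambda\left(\mathbf{T}_{11}\right)\cup\bm\lambda\left(\mathbf{T}_{22}\right)$. Your explicit invariant-subspace framing and multiset bookkeeping are just a more careful rendering of the same argument.
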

Since the proof is critical for subsequent arguments, we reproduce the sketch here for completeness.
\begin{proof}
Let
\begin{equation}
\mathbf{M} = \mathbf{Q}
\begin{bmatrix}
\mathbf{R}_1 \\
\mathbf{0}
\end{bmatrix}, \mathbf{Q}\in \mathbb{R}^{n\times n}, \mathbf{R}_1\in \mathbb{R}^{p \times p}
\end{equation}
be a QR factorization\footnote{Note that QR factorization may not be unique, complement to the freedom in choosing a basis for $\Null\left(\mathbf{M}^\top\right)$,  which is dual to the column space of $\mathbf{M}$.} of $\mathbf{M}$. By substituting this into Eq.~\eqref{eq:ammb} and rearranging we arrive at
\begin{equation}
\begin{bmatrix}
\mathbf{T}_{11}  & \mathbf{T}_{12} \\
\mathbf{T}_{21}  & \mathbf{T}_{22}
\end{bmatrix}
\begin{bmatrix}
\mathbf{R}_1  \\
\mathbf{0}
\end{bmatrix}
= \begin{bmatrix}
\mathbf{R}_1  \\
\mathbf{0}
\end{bmatrix}\mathbf{B}, \; \text{where} \;
\mathbf{Q}^\top\mathbf{A}\mathbf{Q} =
\begin{bmatrix}
\mathbf{T}_{11}  & \mathbf{T}_{12} \\
\mathbf{T}_{21}  & \mathbf{T}_{22}
\end{bmatrix},
\end{equation}
with $\mathbf{T}_{11}\in \mathbb{R}^{p\times p}$ and others of compatible dimension. Since $\mathbf{R}_1$ is nonsingular, $\mathbf{T}_{21}\mathbf{R}_1 = \mathbf{0}$ implying $\mathbf{T}_{21} = \mathbf{0}$. Moreover, $\mathbf{T}_{11}\mathbf{R}_1=\mathbf{R}_1\mathbf{B}\Leftrightarrow \mathbf{T}_{11}=\mathbf{R}_1\mathbf{B}\mathbf{R}_1^{-1}$, suggesting $\mathbf{T}_{11}$ and $\mathbf{B}$ are similar and hence $\bm\lambda\left(\mathbf{T}_{11}\right)=\bm\lambda\left(\mathbf{B}\right)$. Lemma 7.1.1~\cite{golub1996matrix} dictates that $\bm\lambda\left(\mathbf{A}\right)=\bm\lambda\left(\mathbf{T}\right)
=\bm\lambda\left(\mathbf{T}_{11}\right)\cup\bm\lambda\left(\mathbf{T}_{22}\right)$, which leads to the conclusion.
\end{proof}
The next lemma deals with an important inequality of nuclear norms on vertically-partitioned or horizontally-partitioned matrices.
\begin{lemma}[\cite{stewart1990matrix}, Adaptation of Theorem 4.4, pp 33-34]\label{lemma:norm-red}
Let $\mathbf{A}\in \mathbb{R}^{m\times n}$ be partitioned in the form
\begin{equation}
\mathbf{A} =
\begin{bmatrix}
\mathbf{A}_1 \\
\mathbf{A}_2
\end{bmatrix} \; (\text{resp.} \;
\mathbf{A} =
\begin{bmatrix}
\mathbf{A}_1 &
\mathbf{A}_2
\end{bmatrix})
\end{equation}
and the sorted singular values of $\mathbf{A}$ be $\sigma_1 \geq \sigma_2 \geq \cdots \geq \sigma_d \geq 0$ and those of $\mathbf{A}_1$ be $\tau_1 \geq \tau_2 \geq \cdots \tau_d \geq 0$ for $d=\min \left(m, n\right)$. Then $\Vert \mathbf{A}\Vert_* \geq \Vert \mathbf{A}_1\Vert_*$, where the equality holds if and only if $\mathbf{A}_2=\mathbf{0}$.
\end{lemma}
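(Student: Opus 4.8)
The plan is to reduce the nuclear-norm inequality to a termwise comparison of the singular values $\sigma_i$ and $\tau_i$, and then to extract the equality condition via a complementary Frobenius-norm (Schatten-$2$) identity. I treat the vertically-partitioned case $\mathbf{A}=\begin{bmatrix}\mathbf{A}_1\\ \mathbf{A}_2\end{bmatrix}$ throughout; the horizontally-partitioned case is immediate by transposition, since the nuclear norm is unitarily invariant and hence invariant under transpose, while the columns of $\mathbf{A}$ are the rows of $\mathbf{A}^\top$.

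The first and central step is the termwise domination $\sigma_i\geq\tau_i$ for every $i=1,\dots,d$. The cleanest self-contained argument starts from the block identity $\mathbf{A}^\top\mathbf{A}=\mathbf{A}_1^\top\mathbf{A}_1+\mathbf{A}_2^\top\mathbf{A}_2$, whence $\mathbf{A}^\top\mathbf{A}-\mathbf{A}_1^\top\mathbf{A}_1=\mathbf{A}_2^\top\mathbf{A}_2\succeq\mathbf{0}$. Weyl's monotonicity theorem for symmetric matrices then gives $\lambda_i\!\left(\mathbf{A}^\top\mathbf{A}\right)\geq\lambda_i\!\left(\mathbf{A}_1^\top\mathbf{A}_1\right)$ for each $i$ with eigenvalues sorted in decreasing order, and taking square roots yields exactly $\sigma_i\geq\tau_i$ (with the usual zero-padding convention when $\mathbf{A}_1$ has fewer than $d$ nonzero singular values). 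As this is precisely the content of the cited Theorem~4.4, one may simply invoke it. Summing over $i=1,\dots,d$ immediately delivers $\Vert\mathbf{A}\Vert_*=\sum_i\sigma_i\geq\sum_i\tau_i=\Vert\mathbf{A}_1\Vert_*$.

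For the equality characterization, the reverse implication is trivial: if $\mathbf{A}_2=\mathbf{0}$ then $\mathbf{A}$ and $\mathbf{A}_1$ share the same zero-padded list of singular values and the two nuclear norms coincide. The forward implication is the delicate part. Since $\sigma_i\geq\tau_i\geq0$ holds termwise, the sum-equality $\sum_i\sigma_i=\sum_i\tau_i$ forces $\sigma_i=\tau_i$ for every $i$, hence also $\sum_i\sigma_i^2=\sum_i\tau_i^2$, i.e. $\Vert\mathbf{A}\Vert_F^2=\Vert\mathbf{A}_1\Vert_F^2$. But the same block identity gives $\Vert\mathbf{A}\Vert_F^2=\trace\!\left(\mathbf{A}^\top\mathbf{A}\right)=\trace\!\left(\mathbf{A}_1^\top\mathbf{A}_1\right)+\trace\!\left(\mathbf{A}_2^\top\mathbf{A}_2\right)=\Vert\mathbf{A}_1\Vert_F^2+\Vert\mathbf{A}_2\Vert_F^2$, so $\Vert\mathbf{A}_2\Vert_F^2=0$ and therefore $\mathbf{A}_2=\mathbf{0}$.

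The inequality itself is routine and in fact admits a one-line proof via the duality of Lemma~\ref{lemma:duality}: taking $\mathbf{Y}_1$ with $\Vert\mathbf{Y}_1\Vert_2\leq1$ attaining $\Vert\mathbf{A}_1\Vert_*=\langle\mathbf{Y}_1,\mathbf{A}_1\rangle$ and padding it with zero rows to $\mathbf{Y}=\begin{bmatrix}\mathbf{Y}_1\\ \mathbf{0}\end{bmatrix}$ leaves the operator norm unchanged, so $\Vert\mathbf{A}\Vert_*\geq\langle\mathbf{Y},\mathbf{A}\rangle=\Vert\mathbf{A}_1\Vert_*$. The real obstacle is the equality clause, for which the duality route is opaque; the trick that makes it painless is to use the Schatten-$2$ norm as an auxiliary, transferring the equality of the sums of singular values into a Frobenius identity whose trace form isolates $\Vert\mathbf{A}_2\Vert_F$ and annihilates $\mathbf{A}_2$ directly, rather than chasing equality through the two singular value decompositions.
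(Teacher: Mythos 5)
Your proposal is correct and follows essentially the same route as the paper's proof: both rest on the block Gram identity $\mathbf{A}^\top\mathbf{A}=\mathbf{A}_1^\top\mathbf{A}_1+\mathbf{A}_2^\top\mathbf{A}_2$ to get the termwise domination $\sigma_i\geq\tau_i$ (your appeal to Weyl's monotonicity is the same fact the paper extracts from the cited Theorem 3.14 of Stewart and Sun), and both settle the equality case by forcing $\sigma_i=\tau_i$ for all $i$ and then passing to the Frobenius-norm identity $\Vert\mathbf{A}\Vert_F^2=\Vert\mathbf{A}_1\Vert_F^2+\Vert\mathbf{A}_2\Vert_F^2$ to conclude $\mathbf{A}_2=\mathbf{0}$. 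Your added remarks (transposition for the horizontal partition, and the one-line duality proof of the inequality alone) are correct but inessential embellishments on the same argument.
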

\begin{proof}
The original proof in \cite{stewart1990matrix} has shown that $\sigma_i \geq \tau_i$, $\forall i=1, \cdots, d$. This is because $\sigma_i^2$ and $\tau_i^2$, $\forall i=1, \cdots, d$ are eigenvalues of $\mathbf{A}^\top\mathbf{A}$ (resp. $\mathbf{A}\mathbf{A}^\top$) and $\mathbf{A}_1^\top \mathbf{A}_1$ (resp. $\mathbf{A}_1\mathbf{A}_1^\top$), respectively, and $\mathbf{A}^\top\mathbf{A} = \mathbf{A}_1^\top\mathbf{A}_1 + \mathbf{A}_2^\top\mathbf{A}_2$ (resp. $\mathbf{A}\mathbf{A}^\top = \mathbf{A}_1\mathbf{A}_1^\top + \mathbf{A}_2\mathbf{A}_2^\top$). Theorem 3.14~\cite{stewart1990matrix} dictates that $\sigma_i^2 \geq \tau_i^2 +\lambda_d^2$, $\forall i=1, \cdots, d$, where $\lambda_d^2$ is the smallest eigenvalue of $\mathbf{A}_2^\top \mathbf{A}_2$ (resp. $\mathbf{A}_2\mathbf{A}_2^\top$). It follows $\sum_{i=1}^d \sigma_i \geq \sum_{i=1}^d \tau_i$, and hence we have $\Vert \mathbf{A}\Vert_* \geq \Vert \mathbf{A}_1\Vert_*$.

We next show stronger results saying that the inequality is strict unless $\mathbf{A}_2 = \mathbf{0}$.

($\Longrightarrow$)
Since $\sigma_i \geq \tau_i$, $\forall i=1, \cdots, d$,  requiring $\Vert \mathbf{A}\Vert_* =\Vert \mathbf{A}_1\Vert_*$ or $\sum_{i=1}^d \sigma_i = \sum_{i=1}^d \tau_i$ amounts to imposing $\sigma_i=\tau_i$, $\forall i=1, \cdots, d$, which suggests
$\sum_{i=1}^n \sigma_i^2 = \sum_{i=1}^n \tau_i^2$, or $\trace\left(\mathbf{A}^\top\mathbf{A}\right) = \trace\left(\mathbf{A}_1^\top\mathbf{A}_1\right)$ (resp. $\trace\left(\mathbf{A}\mathbf{A}^\top\right) = \trace\left(\mathbf{A}_1\mathbf{A}_1^\top\right)$), identically $\Vert \mathbf{A}\Vert_F^2 =\Vert \mathbf{A}_1\Vert_F^2$. But we also have from the above argument $\trace\left(\mathbf{A}^\top\mathbf{A}\right) = \trace\left(\mathbf{A}_1^\top\mathbf{A}_1\right) + \trace\left(\mathbf{A}_2^\top\mathbf{A}_2\right)$ (resp. $\trace\left(\mathbf{A}\mathbf{A}^\top\right) = \trace\left(\mathbf{A}_1\mathbf{A}_1^\top\right) + \trace\left(\mathbf{A}_2\mathbf{A}_2^\top\right)$), or $\Vert \mathbf{A}\Vert_F^2 =\Vert \mathbf{A}_1\Vert_F^2 + \Vert \mathbf{A}_2\Vert_F^2$. Taking them together we obtain $\Vert \mathbf{A}_2\Vert_F^2 =0$, implying $\mathbf{A}_2 =\mathbf{0}$.

($\Longleftarrow$) Simple substitution verifies the equality and also completes the proof.
\end{proof}


Based on the above two important lemmas, we derived our main results as follows.
\begin{proof}\textbf{(of Theorem~\ref{thm:our_unique})}
We first show the claim about the semidefiniteness of $\mathbf{Z}^*$, and then proceed to prove the uniqueness.

(\textbf{Semidefiniteness of $\mathbf{Z}^*$})
By $\mathbf{XZ}=\mathbf{X}$, we have $\mathbf{Z}^\top\mathbf{X}^\top=\mathbf{X}^\top$. Taking $r$ independent columns from $\mathbf{X}^\top$ (i.e., $r$ independent rows from $\mathbf{X}$) and organize them into a submatrix $\mathbf{M}$ of $\mathbf{X}^\top$, we obtain $\mathbf{Z}^\top\mathbf{M}=\mathbf{MI}$. By Lemma~\ref{lemma:schur-lemma} and its proof, we have a QR factorization of $\mathbf{M}$ and one similarity transform of $\mathbf{Z}^\top$ as, respectively
\begin{equation} \label{main:qr_basis}
\begin{aligned}
 \mathbf{M} & =
\begin{bmatrix}
\mathbf{U} & \mathbf{U}^\bot
\end{bmatrix}
\begin{bmatrix}
\mathbf{R} \\ \mathbf{0}
\end{bmatrix}, \; \; \text{and} \; \; \\
 \mathbf{T} & =
 \begin{bmatrix}
\mathbf{U} &
\mathbf{U}^\bot
\end{bmatrix}^\top
\mathbf{Z}^\top
\begin{bmatrix}
\mathbf{U} & \mathbf{U}^\bot
\end{bmatrix} \\
& =\begin{bmatrix}
\mathbf{T}_{11}  & \mathbf{T}_{12} \\
\mathbf{0}  & \mathbf{T}_{22}
\end{bmatrix}
=\begin{bmatrix}
\mathbf{I}_r  & \mathbf{T}_{12} \\
\mathbf{0}  & \mathbf{T}_{22}
\end{bmatrix},
\end{aligned}
\end{equation}
where $\mathbf{U}^\bot$ spans the complementary dual subspace of $\mathbf{U}$. Moreover, we have obtained $\mathbf{T}_{11}=\mathbf{I}$ because proof of Lemma~\ref{lemma:schur-lemma} suggests $\mathbf{T}_{11}=\mathbf{R}\mathbf{I}\mathbf{R}^{-1}=\mathbf{I}$. The dimension is obviously determined by rank of $\mathbf{X}$, i.e., $r=\rank\left(\mathbf{X}\right)$.

We continue to show that towards minimal $\Vert \mathbf{Z}^\top\Vert_*$, $\mathbf{T}_{12} = \mathbf{T}_{22}=\mathbf{0}$. By the unitary invariance property of nuclear norm, $\min. \; \Vert \mathbf{Z}^\top \Vert_* \Longleftrightarrow \min. \; \Vert \mathbf{T}\Vert_*$. Noticing that
\begin{equation}\label{main:zero_out}
\begin{bmatrix}
\mathbf{T}_{12} \\ \mathbf{T}_{22}
\end{bmatrix}
= \begin{bmatrix}
\mathbf{U} & \mathbf{U}^\bot
\end{bmatrix}^\top \mathbf{Z}^\top \mathbf{U}^\bot
\end{equation}
and $\mathbf{Z}^\top\mathbf{M}=\mathbf{M}$ results in two constraints
\begin{equation}
\mathbf{Z}^\top \mathbf{U}\mathbf{R} = \mathbf{U}\mathbf{R} \; \; \text{and} \; \;
\mathbf{Z}^\top \mathbf{U}^\bot \mathbf{0} = \mathbf{U}^\bot \mathbf{0} =\mathbf{0}.
\end{equation}
Under these constraints, we can always make $\mathbf{Z}^\top \mathbf{U}^\bot = \mathbf{0}$, or effectively $\mathbf{T}_{12} = \mathbf{T}_{22}=\mathbf{0}$, attaining the minimizer in that
\begin{equation}\label{main:opt-cond}
\left\Vert
\begin{bmatrix}
\mathbf{I}_r  & \mathbf{T}_{12} \\
\mathbf{0}  & \mathbf{T}_{22}
\end{bmatrix}
\right\Vert_* \geq
\left\Vert
\begin{bmatrix}
\mathbf{I}_r  \\
\mathbf{0}
\end{bmatrix}
\right\Vert_* = \left\Vert
\mathbf{I}_r
\right\Vert_*=r,
\end{equation}
where the first inequality has followed from Lemma~\ref{lemma:norm-red} and equality is obtained only when $\mathbf{T}_{12} = \mathbf{T}_{22}=\mathbf{0}$. Hence we have shown that $\mathbf{Z}^\top = \mathbf{U}\mathbf{U}^\top=\mathbf{Z} \succeq \mathbf{0}$, as an optimal solution of \textbf{LRR}.

(\textbf{Uniqueness of $\mathbf{Z}^*$}) Suppose a perturbed version $\mathbf{Z}'=\mathbf{Z}^* + \mathbf{H}$ is also a minimizer. So $\mathbf{XZ}'=\mathbf{X}\left(\mathbf{Z}^*+\mathbf{H}\right)=\mathbf{X}=\mathbf{X}\mathbf{Z}^*$, suggesting $\mathbf{X}\mathbf{H}=\mathbf{0}$ or $\mathbf{H}$ is in $\Null\left(\mathbf{X}\right)$ (which complements the row space). We have
\begin{equation}
\begin{aligned}
\mathbf{H}^\top\mathbf{X}^\top  = \mathbf{0}& \Longrightarrow \mathbf{H}^\top
\begin{bmatrix}
\mathbf{U}  & \mathbf{U}^\bot
\end{bmatrix}
\begin{bmatrix}
\mathbf{R}  \\ \mathbf{0}
\end{bmatrix} = \mathbf{0} \\
& \Longrightarrow
\mathbf{H}^\top \mathbf{U}\mathbf{R}=\mathbf{0} \Longrightarrow \mathbf{H}^\top \mathbf{U} =\mathbf{0},
\end{aligned}
\end{equation}
where the last equality holds because $\mathbf{R}$ is nonsingular. If $\mathbf{H}\neq \mathbf{0}$, we have
\begin{equation}
\begin{aligned}
& \begin{bmatrix}
\mathbf{U}  & \mathbf{U}^\bot
\end{bmatrix}^\top \mathbf{Z}'^\top
\begin{bmatrix}
\mathbf{U}  & \mathbf{U}^\bot
\end{bmatrix}\\
& =\begin{bmatrix}
\mathbf{T}'_{11}  & \mathbf{T}'_{12} \\
\mathbf{0}  & \mathbf{T}'_{22}
\end{bmatrix}
 =\begin{bmatrix}
\mathbf{I}_r  & \mathbf{U}^\top \mathbf{H}^\top \mathbf{U}^\bot \\
\mathbf{0}  & \left(\mathbf{U}^\bot\right)^\top \mathbf{H}^\top \mathbf{U}^\bot
\end{bmatrix},
\end{aligned}
\end{equation}
where we have substituted the analytic values of $\mathbf{Z}^*$ and its corresponding $\mathbf{T}_{ij}, \forall i, j = \{1, 2\}$ as discussed above and the fact $\mathbf{H}^\top\mathbf{U}=\mathbf{0}$.  Since $\mathbf{H}^\top\mathbf{U}^\bot\neq \mathbf{0}$ (otherwise together with $\mathbf{H}^\top\mathbf{U}=\mathbf{0}$ we would obtain $\mathbf{H}=\mathbf{0}$), employing the inequality in Lemma~\ref{lemma:norm-red} again we see that
\begin{equation}
\left\Vert
\begin{bmatrix}
\mathbf{I}_r  & \mathbf{U}^\top \mathbf{H}^\top \mathbf{U}^\bot \\
\mathbf{0}  & \left(\mathbf{U}^\bot\right)^\top \mathbf{H}^\top \mathbf{U}^\bot
\end{bmatrix}\right\Vert_*
>
\left\Vert
\begin{bmatrix}
\mathbf{I}_r  & \mathbf{0} \\
\mathbf{0}  & \mathbf{0}
\end{bmatrix}\right\Vert_* =
\left\Vert
\mathbf{I}_r
\right\Vert_* =
\left\Vert
\mathbf{Z}^*
\right\Vert_*.
\end{equation}
In other words, the objective is strictly increased unless $\mathbf{H}^\top =\mathbf{0}$ or $\mathbf{H}=\mathbf{0}$, which establishes the uniqueness.
and also concludes the proof.
\end{proof}
\begin{remark}
Nonuniqueness of $QR$ factorization of $\mathbf{M}$ will not affect the uniqueness of $\mathbf{Z}^*$ as follows. Suppose we choose another basis $\mathbf{V}$ for column space of $\mathbf{M}$, obviously $\mathbf{V}$ and $\mathbf{U}$ must be related by a within-space rotation $\mathbf{R}$, i.e., $\mathbf{V}=\mathbf{UR}$. Hence w.r.t. the new basis we have $\mathbf{Z}^*=\mathbf{V}\mathbf{V}^\top=\mathbf{UR}\mathbf{R}^\top\mathbf{U}=\mathbf{U}\mathbf{U}^\top$, as $\mathbf{R}^\top\mathbf{R}=\mathbf{R}\mathbf{R}^\top =\mathbf{I}$.
\end{remark}
\subsection{Robust Subspace Segmentation with Data Containing Outliers and Noises}\label{sec:robust-set}
To account for noises and outliers, explicit distortion terms can be  introduced into the objective and constraint. Hence we obtain the robust version of \textbf{LRR-PSD} and \textbf{LRR} respectively as follows
\begin{equation} \label{eq:robust-PSD}
\min. \; \Vert \mathbf{Z} \Vert_* + \lambda\Vert \mathbf{E} \Vert_{\ell}, \; \text{s.t.} \; \mathbf{X} = \mathbf{X}\mathbf{Z} + \mathbf{E}, \mathbf{Z}\succeq \mathbf{0},
\end{equation}
\begin{equation}\label{eq:robust-form}
\min. \Vert\mathbf{Z}\Vert_*+\lambda\Vert \mathbf{E}\Vert_{\ell}, \; \text{s.t.} \; \mathbf{XZ}+\mathbf{E} = \mathbf{X}.
\end{equation}
We have used $\Vert\cdot\Vert_{\ell}$ to mean generic norms.  We caution that we cannot in general expect these two versions to be equivalent despite the provable equivalence of \textbf{LRR-PSD} and \textbf{LRR}. Remarkably, the problem has changed much due to the extra variable $\mathbf{E}$. Nevertheless, it is still possible to partially gauge the behaviors of the solutions as follows.

Suppose an optimal $\mathbf{E}^*$ is somehow achieved (i.e., we assume it is fixed), we are then only concerned with
\begin{equation}
\min. \; \Vert \mathbf{Z}\Vert_*  \; \text{s.t.} \; \mathbf{X}\mathbf{Z} +\mathbf{E}^* = \mathbf{X}, \left(\mathbf{Z}\succeq \mathbf{0}\right).
\end{equation}
Since columns of $\mathbf{E}^*$ must be in the column space of $\mathbf{X}$, we assume $\mathbf{E}^*=\mathbf{X}\delta\mathbf{E}$.
Then we obtain from the equality constraint $\mathbf{X}\left(\mathbf{Z}+\delta\mathbf{E}\right)=\mathbf{X}$. By employing similar process in the proof of Theorem~\ref{thm:our_unique}, one can easily verify that
\begin{align}
&
\begin{bmatrix}
\mathbf{U}  & \mathbf{U}^\bot
\end{bmatrix}^\top
 \left(\mathbf{Z}^\top+\delta\mathbf{E}^\top\right) \begin{bmatrix}
\mathbf{U}  & \mathbf{U}^\bot
\end{bmatrix} \nonumber \\
=&
\begin{bmatrix}
\mathbf{T}_{11}^\mathbf{Z^\top}  & \mathbf{T}_{12}^{\mathbf{Z}^\top} \\
\mathbf{C}  & \mathbf{T}_{22}^{\mathbf{Z}^\top}
\end{bmatrix} +
\begin{bmatrix}
 \mathbf{T}_{11}^\mathbf{\delta\mathbf{E}^\top}& \mathbf{T}_{12}^{\delta\mathbf{E}^\top} \\
-\mathbf{C}  & \mathbf{T}_{22}^{\delta\mathbf{E}^\top}
\end{bmatrix}
\end{align}
where the notation is consistent with the proof in Theorem~\ref{thm:our_unique}. So towards minimizing $\Vert \mathbf{Z}^\top\Vert_*$, we can always have $\mathbf{Z}^\top\mathbf{U}^\bot = \mathbf{0}$, or $\mathbf{T}_{12}^{\mathbf{Z}^\top}=\mathbf{T}_{22}^{\mathbf{Z}^\top} =\mathbf{0}$, for any $\delta\mathbf{E}^\top$. So the rest of spectrum of $\mathbf{Z}^\top$ is determined by $\mathbf{T}_{11}^\mathbf{Z^\top}$, and we have that $\mathbf{Z}^\top$ can have at most $r$ nonvanishing eigenvalues, where $r=\rank\left(\mathbf{X}\right)$. Note that $\left(\mathbf{T}_{11}^\mathbf{Z^\top}+\mathbf{T}_{11}^\mathbf{\delta\mathbf{E}^\top}\right)$ has $r$ eigenvalues of $1$, so spectrum of $\mathbf{T}_{11}^\mathbf{Z^\top}$ will be perturbation of that since the norm of $\mathbf{T}_{11}^\mathbf{\delta\mathbf{E}^\top}$ is in general small. This is also confirmed by our numerical experiments in \ref{sec:robust_spec}.

Moreover, we have intentionally left the norm for $\mathbf{E}$ unspecified since it apparently depends on the noise model we assume. The use of $\Vert\mathbf{E}\Vert_{2, 1}$ assumes the noise is sample-specific. In practice, however, a more natural assumption is uniformly random, i.e., each dimension of every data sample has the same chance of getting corrupted. In this case, the simple $\Vert \mathbf{E}\Vert_1$ will suffice. We demonstrate via experiments~\ref{sec:select}, and show that indeed $\Vert \cdot\Vert_1$ is more robust in that case.

The above comments about spectrum properties and noise model selection apply to both settings.

\subsection{Solving Robust LRR-PSD via Eigenvalue Thresholding}
The equivalence of \textbf{LRR-PSD} and \textbf{LRR} does not readily translate to the respective robust versions, and hence we need to figure out ways of solving the robust LRR-PSD. Due to the strong connection between these two problems, however, we will still try to employ the Augmented Lagrange Multipler (ALM) method (see e.g., \cite{lin2009augmented}) to tackle this as in ~\cite{liu2010robust}.

We first convert the problem into its equivalent form as
\begin{equation}
\min\limits_{\mathbf{Z}, \mathbf{E}, \mathbf{J}} \; \Vert \mathbf{J}\Vert_* + \lambda \Vert \mathbf{E} \Vert_{\ell}, \; \text{s.t.} \; \mathbf{X}=\mathbf{XZ}+\mathbf{E}, \mathbf{Z}=\mathbf{J}, \mathbf{Z}\succeq \mathbf{0},
\end{equation}
where we have used $\Vert \mathbf{E}\Vert_{\ell}$ to mean generic norms. Forming the partial ALM problem, we have
\begin{align}
\min\limits_{\mathbf{Z}, \mathbf{E}, \mathbf{J}\succeq \mathbf{0}, \mathbf{Y}_1, \mathbf{Y}_2} \;
& \Vert \mathbf{J}\Vert_* + \lambda \Vert \mathbf{E}\Vert_{\ell} \nonumber \\
 + & \langle \mathbf{Y}_1, \mathbf{X-XZ-E}\rangle + \langle \mathbf{Y}_2, \mathbf{Z-J}\rangle \nonumber \\
 + &\frac{\mu}{2}\Vert \mathbf{X-XZ-E}\Vert_F^2 + \frac{\mu}{2} \Vert \mathbf{Z-J}\Vert_F^2.
\end{align}
We can then follow the inexact ALM routine~\cite{lin2009augmented} to update $\mathbf{Z}$, $\mathbf{E}$, $\mathbf{J}$, $\mathbf{Y}_1$, $\mathbf{Y}_2$ alternately. While fixing others, how to update $\mathbf{E}$ depends on the norm $\Vert\cdot \Vert_{\ell}$. There are a bunch of norms that facilitate closed-form solutions, such as the $\Vert\cdot \Vert_{2, 1}$ discussed in \cite{liu2010robust} and $\Vert \cdot\Vert_1$ (see e.g., \cite{lin2009augmented}). How to update $\mathbf{J} (\mathbf{J}\succeq \mathbf{0})$ is the major obstacle to clean up. To be specific, we will be facing problem of this form to update $\mathbf{J}$
\begin{equation}
\mathbf{M}^* = \mathop{\arg \min}\limits_{\mathbf{M}} \; \frac{1}{\mu}\Vert \mathbf{M}\Vert_* + \frac{1}{2}\Vert \mathbf{M}-\mathbf{G}\Vert_F^2, \; \text{s.t.} \; \mathbf{M}\succeq 0,
\end{equation}
where $\mathbf{G}$ may or may not be symmetric. We will next show in Theorem~\ref{thm:sym} that symmetric $\mathbf{G}$ facilitates a closed-form solution, and generalize this in Theorem~\ref{thm:asymm} which basically states that asymmetric $\mathbf{G}$ also leads to a closed-form solution. Moreover, the major computational cost lies with eigen-decomposition of a symmetric square matrix, as compared with singular value decomposition of a square matrix of the same size in solving the counterpart in robust \textbf{LRR}.

\begin{lemma}[\cite{recht2008necessary}, Lemma 3.2] \label{lem:norm_ine}
For any block partitioned matrix
$
\mathbf{X} = \begin{bmatrix}
      \mathbf{A} & \mathbf{B} \\
      \mathbf{C} & \mathbf{D} \\
    \end{bmatrix},
$
this inequality holds
\begin{equation}
\Vert \mathbf{X}\Vert_* \geq
 \left\Vert \begin{bmatrix}
   \mathbf{A} & \mathbf{0} \\
   \mathbf{0} & \mathbf{D} \\
 \end{bmatrix}
 \right\Vert_* =
 \Vert \mathbf{A} \Vert_* + \Vert \mathbf{D} \Vert_*.
\end{equation}
Similar inequality also holds for the square of Frobenius norm $\Vert\cdot \Vert_F^2$.
 \end{lemma}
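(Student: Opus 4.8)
The plan is to observe that passing from $\mathbf{X}$ to its block-diagonal truncation is a \emph{pinching}, which can be written as the average of $\mathbf{X}$ with a single signed conjugate of itself. Introduce the two orthogonal sign matrices $\mathbf{P}_L=\begin{bmatrix}\mathbf{I} & \mathbf{0}\\ \mathbf{0} & -\mathbf{I}\end{bmatrix}$ and $\mathbf{P}_R=\begin{bmatrix}\mathbf{I} & \mathbf{0}\\ \mathbf{0} & -\mathbf{I}\end{bmatrix}$, partitioned to be compatible with the rows and the columns of $\mathbf{X}$ respectively (they need not have equal size, since $\mathbf{X}$ may be rectangular). A one-line multiplication gives $\mathbf{P}_L\mathbf{X}\mathbf{P}_R=\begin{bmatrix}\mathbf{A} & -\mathbf{B}\\ -\mathbf{C} & \mathbf{D}\end{bmatrix}$, whence
\[
\begin{bmatrix}\mathbf{A} & \mathbf{0}\\ \mathbf{0} & \mathbf{D}\end{bmatrix}=\frac{1}{2}\left(\mathbf{X}+\mathbf{P}_L\mathbf{X}\mathbf{P}_R\right).
\]
This identity is the crux; once it is in hand the entire claim reduces to unitary invariance and the triangle inequality of the norms involved, both of which are available to us (the nuclear norm is the Schatten-$1$ norm, hence unitarily invariant).

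For the nuclear-norm inequality I would apply the triangle inequality to the averaged expression and then discard the conjugation by unitary invariance, using $\Vert\mathbf{P}_L\mathbf{X}\mathbf{P}_R\Vert_*=\Vert\mathbf{X}\Vert_*$:
\[
\left\Vert\begin{bmatrix}\mathbf{A} & \mathbf{0}\\ \mathbf{0} & \mathbf{D}\end{bmatrix}\right\Vert_*\leq\frac{1}{2}\Vert\mathbf{X}\Vert_*+\frac{1}{2}\Vert\mathbf{P}_L\mathbf{X}\mathbf{P}_R\Vert_*=\Vert\mathbf{X}\Vert_*.
\]
The very same decomposition handles the Frobenius statement: since $\Vert\cdot\Vert_F^2$ is convex and unitarily invariant, convexity applied to the midpoint gives that the truncation's $\Vert\cdot\Vert_F^2$ is at most $\tfrac12\Vert\mathbf{X}\Vert_F^2+\tfrac12\Vert\mathbf{P}_L\mathbf{X}\mathbf{P}_R\Vert_F^2=\Vert\mathbf{X}\Vert_F^2$ (equivalently, one may just note $\Vert\mathbf{X}\Vert_F^2=\Vert\mathbf{A}\Vert_F^2+\Vert\mathbf{B}\Vert_F^2+\Vert\mathbf{C}\Vert_F^2+\Vert\mathbf{D}\Vert_F^2$ entrywise).

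It then remains to prove the middle equality $\bigl\Vert\,\mathrm{blockdiag}(\mathbf{A},\mathbf{D})\,\bigr\Vert_*=\Vert\mathbf{A}\Vert_*+\Vert\mathbf{D}\Vert_*$. For this I would take SVDs $\mathbf{A}=\mathbf{U}_A\bm{\Sigma}_A\mathbf{V}_A^\top$ and $\mathbf{D}=\mathbf{U}_D\bm{\Sigma}_D\mathbf{V}_D^\top$ and assemble block-diagonal orthogonal factors $\begin{bmatrix}\mathbf{U}_A & \mathbf{0}\\ \mathbf{0} & \mathbf{U}_D\end{bmatrix}$ and $\begin{bmatrix}\mathbf{V}_A & \mathbf{0}\\ \mathbf{0} & \mathbf{V}_D\end{bmatrix}$; this exhibits an SVD of the block-diagonal matrix whose singular values are exactly those of $\mathbf{A}$ and $\mathbf{D}$ pooled together, so the nuclear norm is additive. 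A fully independent route avoiding the pinching identity would invoke the duality $\Vert\mathbf{X}\Vert_*=\sup_{\Vert\mathbf{Y}\Vert_2\leq1}\langle\mathbf{Y},\mathbf{X}\rangle$ from Lemma~\ref{lemma:duality}: feed in the block-diagonal certificate $\mathbf{Y}=\begin{bmatrix}\mathbf{Y}_A & \mathbf{0}\\ \mathbf{0} & \mathbf{Y}_D\end{bmatrix}$ built from the individual dual certificates of $\mathbf{A}$ and $\mathbf{D}$ (its operator norm is $\max(\Vert\mathbf{Y}_A\Vert_2,\Vert\mathbf{Y}_D\Vert_2)\leq1$), and note that $\langle\mathbf{Y},\mathbf{X}\rangle=\langle\mathbf{Y}_A,\mathbf{A}\rangle+\langle\mathbf{Y}_D,\mathbf{D}\rangle$ since a block-diagonal $\mathbf{Y}$ pairs only with the diagonal blocks of $\mathbf{X}$. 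Neither route is technically heavy; I expect the only point demanding care to be the bookkeeping that makes the off-diagonal blocks $\mathbf{B},\mathbf{C}$ drop out --- automatic from the sign pattern in the pinching proof, and from block-diagonality of $\mathbf{Y}$ in the duality proof --- together with using two possibly-unequal sign matrices so that the argument remains valid for rectangular $\mathbf{X}$.
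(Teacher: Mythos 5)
Your proof is correct, and your primary route is genuinely different from the paper's. Your crux is the pinching identity
\[
\begin{bmatrix}\mathbf{A} & \mathbf{0}\\ \mathbf{0} & \mathbf{D}\end{bmatrix}
=\tfrac{1}{2}\left(\mathbf{X}+\mathbf{P}_L\mathbf{X}\mathbf{P}_R\right),
\]
after which the inequality is just the triangle inequality plus unitary invariance of the Schatten norms, the middle equality comes from assembling a block-diagonal SVD of $\mathrm{blockdiag}(\mathbf{A},\mathbf{D})$, and the Frobenius claim follows from the same identity by convexity (or entrywise). The paper instead runs everything through the duality $\Vert\mathbf{X}\Vert_*=\sup\left\{\langle\mathbf{Z},\mathbf{X}\rangle \,\vert\, \Vert\mathbf{Z}\Vert_2=1\right\}$ of Lemma~\ref{lemma:duality}: it expresses $\Vert\mathrm{blockdiag}(\mathbf{A},\mathbf{D})\Vert_*$ as a supremum of $\langle\cdot,\mathbf{X}\rangle$ over block-diagonal test matrices only (the off-diagonal blocks $\mathbf{B},\mathbf{C}$ pair to zero against such test matrices), so the inequality is simply a supremum over a subset of the feasible set defining $\Vert\mathbf{X}\Vert_*$, and the additivity $\Vert\mathbf{A}\Vert_*+\Vert\mathbf{D}\Vert_*$ drops out of the same computation because the constraint $\Vert\mathrm{blockdiag}(\mathbf{Z}_{11},\mathbf{Z}_{22})\Vert_2\leq 1$ decouples into separate constraints on $\mathbf{Z}_{11}$ and $\mathbf{Z}_{22}$; this is exactly the ``fully independent route'' you sketch at the end, so you have in effect reproduced the paper's proof as your backup. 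Comparing the two: your pinching argument is more elementary (no duality needed) and more general --- it proves the truncation inequality in one stroke for every unitarily invariant norm and every convex unitarily invariant function, so $\Vert\cdot\Vert_F^2$ requires no separate treatment --- whereas the paper's duality argument yields the inequality and the block-additivity equality within a single computation, at the cost of invoking Lemma~\ref{lemma:duality} and handling the Frobenius case by a separate trivial remark. Your attention to using two possibly different-sized sign matrices $\mathbf{P}_L,\mathbf{P}_R$ is precisely what keeps your argument valid for rectangular $\mathbf{X}$, and your block-SVD step for the equality is sound provided full (square) orthogonal factors are used, with the unordered singular values being immaterial to the nuclear norm.
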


\begin{theorem} \label{thm:sym}
For any symmetric matrix $\mathbf{S}\in \mathcal{S}^n$, the unique closed form solution to the optimization problem
\begin{equation} \label{eq:opt_gen_diag}
\mathbf{M}^* = \mathop{\arg \min}\limits_{\mathbf{M}} \; \frac{1}{\mu}\Vert \mathbf{M}\Vert_* + \frac{1}{2}\Vert \mathbf{M}-\mathbf{S}\Vert_F^2,  \mathbf{M}\succeq 0,
\end{equation} takes the form
\begin{equation} \label{eq:opt_symm}
\mathbf{M}^* = \mathbf{Q}\; \Diag\left[\max(\bm{\lambda}-1/\mu, 0)\right] \mathbf{Q}^\top,
\end{equation}
whereby $\mathbf{S}=\mathbf{Q}\bm{\Lambda} \mathbf{Q}^\top$, for $\bm\Lambda=\Diag\left(\bm{\lambda}\right)$, is the spectrum(eigen-) decomposition of $\mathbf{S}$ and $\max\left(\cdot, \cdot\right)$ should be understood element-wise. \footnote{Toh and Yun~\cite{toh2009accelerated} have shed some light on the results (ref. Remark $3$ in their paper) but lack a detailed development and theoretic proof, and our proof is derived independent of their work. Moreover, solution to the general case as stated in the next theorem extends this results.}.
\end{theorem}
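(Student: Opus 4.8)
The plan is to exploit the positive semidefinite constraint to trade the nuclear norm for a trace, and then to diagonalize in the eigenbasis of $\mathbf{S}$ so that the problem separates into $n$ independent scalar problems. First I would observe that any feasible $\mathbf{M}$ lies in $\mathcal{S}_+^n$ and is therefore symmetric with nonnegative eigenvalues; hence its singular values coincide with its eigenvalues and $\Vert\mathbf{M}\Vert_* = \trace(\mathbf{M})$. Writing the eigendecomposition $\mathbf{S}=\mathbf{Q}\bm\Lambda\mathbf{Q}^\top$ with $\mathbf{Q}$ orthogonal and substituting $\mathbf{N}=\mathbf{Q}^\top\mathbf{M}\mathbf{Q}$, the invariance of the trace and of the Frobenius norm under the orthogonal conjugation, together with $\mathbf{M}\succeq\mathbf{0}\Leftrightarrow\mathbf{N}\succeq\mathbf{0}$, turns the problem into
\[
\min_{\mathbf{N}\succeq\mathbf{0}}\ \frac{1}{\mu}\trace(\mathbf{N})+\frac{1}{2}\Vert\mathbf{N}-\bm\Lambda\Vert_F^2 .
\]

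Next I would argue that an optimal $\mathbf{N}$ must be diagonal. Because $\bm\Lambda$ is diagonal, the off-diagonal entries of $\mathbf{N}$ enter the objective only through $\Vert\mathbf{N}-\bm\Lambda\Vert_F^2=\sum_i(N_{ii}-\lambda_i)^2+\sum_{i\ne j}N_{ij}^2$ and not at all through $\trace(\mathbf{N})$; hence replacing $\mathbf{N}$ by its diagonal part $\Diag(\diag(\mathbf{N}))$ can only decrease the objective, strictly so unless the off-diagonals already vanish. The one thing that must be verified here --- and this is the real crux --- is that this replacement stays feasible, which holds because the diagonal entries of a positive semidefinite matrix are nonnegative, so the retained diagonal is itself positive semidefinite. (Alternatively one reaches the same conclusion by recursively invoking the block inequality of Lemma~\ref{lem:norm_ine}, using that principal submatrices of a PSD matrix remain PSD.) With $\mathbf{N}=\Diag(\nu_1,\dots,\nu_n)$ the objective separates into the scalar problems $\min_{\nu_i\ge 0}\ \tfrac{1}{\mu}\nu_i+\tfrac12(\nu_i-\lambda_i)^2$, each a one-dimensional convex quadratic whose constrained minimizer is the soft-threshold $\nu_i^*=\max(\lambda_i-1/\mu,0)$; reassembling gives $\mathbf{M}^*=\mathbf{Q}\,\Diag[\max(\bm\lambda-1/\mu,0)]\,\mathbf{Q}^\top$.

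Finally, uniqueness comes essentially for free: the map $\mathbf{M}\mapsto\tfrac12\Vert\mathbf{M}-\mathbf{S}\Vert_F^2$ is strictly convex and the trace term is convex, so the objective is strictly convex over the convex feasible set $\mathcal{S}_+^n$ and admits at most one minimizer. I expect the separable scalar reduction and the soft-thresholding to be entirely routine; the only steps demanding care are justifying the passage to diagonal form while preserving the semidefinite constraint, and noting that it is precisely the symmetry of $\mathbf{S}$ that licenses the real orthogonal eigendecomposition used at the outset --- the asymmetric target being deferred to Theorem~\ref{thm:asymm}.
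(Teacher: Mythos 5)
Your proof is correct, and it shares the paper's overall skeleton --- rotate into the eigenbasis of $\mathbf{S}$, show the minimizer must be diagonal, then solve $n$ separable scalar quadratics to get the soft-threshold $\max(\lambda_i-1/\mu,0)$ --- but it justifies the crucial diagonalization step by a genuinely different and more elementary device. The paper keeps the nuclear norm throughout and invokes the block-partition inequality of Lemma~\ref{lem:norm_ine} (itself proved via duality between $\Vert\cdot\Vert_*$ and $\Vert\cdot\Vert_2$) to argue that restricting $\widetilde{\mathbf{M}}$ to its diagonal strictly decreases the objective; you instead observe at the outset that on the feasible set $\mathcal{S}_+^n$ the nuclear norm linearizes, $\Vert\mathbf{M}\Vert_*=\trace(\mathbf{M})$, after which the off-diagonal entries appear \emph{only} in the Frobenius term $\sum_{i\neq j}N_{ij}^2$ and the reduction to diagonal form is immediate, no norm inequality needed. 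You also make explicit a point the paper's proof silently relies on: the zeroed-out matrix $\Diag(\diag(\mathbf{N}))$ must remain feasible, which holds because diagonal entries of a PSD matrix are nonnegative --- the paper's restriction argument needs this too but never states it. What each route buys: yours is self-contained and exposes exactly why the PSD constraint makes this problem easier than unconstrained singular value thresholding (the nonsmooth term becomes linear on the cone); the paper's lemma-based route is the one that generalizes, since Lemma~\ref{lem:norm_ine} does not require semidefiniteness and underlies the paper's remark that similar arguments recover the general SVT result of~\cite{cai2008singular}, where your trace identity is unavailable. Your closing observation --- that uniqueness plus validity of the formula for \emph{any} choice of $\mathbf{Q}$ automatically resolves the eigenvector ambiguities --- also disposes of the issue the paper handles in a separate remark after the theorem.
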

\begin{proof}
Observing that the objective is strictly convex over a convex set, we assert there exists a unique minimizer. The remaining task to single out the minimizer. Symmetric $\mathbf{S}$ admits a spectrum factorization $\mathbf{S}=\mathbf{Q}\bm\Lambda \mathbf{Q}^\top$, where $\mathbf{Q}^{-1} =\mathbf{Q}^\top$. We set $\widetilde{\mathbf{M}} = \mathbf{Q}^\top \mathbf{MQ}$, and hence the optimization in Eq.~\eqref{eq:opt_gen_diag} can be cast into
\begin{equation} \label{eq:opt_diag}
\widetilde{\mathbf{M}}^* = \mathop{\arg \min}\limits_{\widetilde{\mathbf{M}}} \; \frac{1}{\mu}\Vert \widetilde{\mathbf{M}}\Vert_* + \frac{1}{2}\Vert \widetilde{\mathbf{M}} - \bm\Lambda\Vert_F^2, \; \; \widetilde{\mathbf{M}}\succeq \mathbf{0}.
\end{equation}
By the unitary invariance property of the Frobenius norm and the nuclear norm, and the fact that $\mathbf{M}\succeq \mathbf{0} \Leftrightarrow \mathbf{Q}^\top \mathbf{M} \mathbf{Q}\succeq \mathbf{0}$ with unitary (orthogonal) $\mathbf{Q}$, we assert these two optimization problems are exactly equivalent (in the sense that $\mathbf{M}$ and $\widetilde{\mathbf{M}}$ can be recovered from each other deterministically).

Next we argue that a minimizer $\widetilde{\mathbf{M}}^*$ must be a diagonal matrix. Let $f(\widetilde{\mathbf{M}}) =1/\mu \Vert \widetilde{\mathbf{M}} \Vert_* + 1/2\Vert \widetilde{\mathbf{M}}-\bm\Lambda \Vert_F^2$. In fact, for a non-diagonal matrix $\widetilde{\mathbf{M}}_0$, we can always restrict it to diagonal elements to get $\widetilde{\mathbf{M}}_d$ such that $f(\widetilde{\mathbf{M}}_d)< f(\widetilde{\mathbf{M}}_0)$ by Lemma.~\ref{lem:norm_ine} and the fact $\bm\Lambda$ being diagonal. The strict inequality holds since restriction from a non-diagonal matrix to its diagonal elements results in strict decrease in square of the Frobenius norm. So assuming $\widetilde{\mathbf{M}}=\Diag \left(\xi_1, \cdots, \xi_n\right)$ and $\bm\Lambda=\Diag\left(\lambda_1, \cdots, \lambda_n\right)$, the problem reduces to a quadratic program w.r.t. $\left\{\xi_i\right\}_{i=1}^n$
\begin{equation}
\left\{\xi_i^*\right\}_{i=1}^n = \mathop{\arg \min}\limits_{\left\{\xi_i\right\}_{i=1}^n} \frac{1}{\mu} \sum_{i=1}^n \xi_i + \frac{1}{2}\sum_{i=1}^n \Vert \xi_i -\lambda_i \Vert^2, \;  \; \xi_i\geq 0, \forall{i}.
\end{equation}
The programming is obviously separable and simple manipulation suggests the unique closed form solution $\xi_i^* = \max(\lambda_i - 1/\mu, 0)$, which concludes the proof.
\end{proof}
\begin{remark}
Note that uniqueness of the solution may not be directly translated from Eq.~\eqref{eq:opt_diag} to \eqref{eq:opt_gen_diag} since one may argue $\mathbf{Q}$ is not unique in general. There are three causes to the ambiguity: 1) general sign reversal ambiguity of eigenvectors, 2) freedom with eigenvectors corresponding to the zero eigenvalues, and 3) freedom with eigenvectors corresponding to eigenvalues with multiplicity greater than $1$. Noticing that $\mathbf{M}^* =\sum_{i=1}^r \max (\lambda_i-1/\mu, 0)\mathbf{q}_i\mathbf{q}_i^\top$, the sign ambiguity and problems caused by zero-valued eigenvalues are readily removed in view of the form of the summand $\max (\lambda_i-1/\mu, 0)\mathbf{q}_i\mathbf{q}_i^\top$. For the last problem, assume one repeated eigenvalue $\lambda_i$ has one set of its eigenvectors arranged column-wise in $\mathbf{V} =[\mathbf{v}_1, \cdots, \mathbf{v}_k]$, which essentially spans a $k$-dimensional subspace (and acts as the basis). So this part of contribution to $\mathbf{M}^*$ can be written as $\max\left(\lambda_i-1/\mu, 0\right) \mathbf{VV}^\top$. Realizing that generating a new set of eigenvectors via linear combination can be accounted for by a rotation to the original basis vectors, namely $\widetilde{\mathbf{V}}=\mathbf{V}\mathbf{R}_{k\times k}$ for $\mathbf{R}^\top \mathbf{R} =\mathbf{I}$ in that subspace, we have  $\lambda_i \widetilde{\mathbf{V}}\widetilde{\mathbf{V}}^\top=\lambda_i \mathbf{VR}(\mathbf{VR})^\top =\lambda_i \mathbf{VV}^\top$. Hence the sum is not altered by any cause.
\end{remark}
In fact, building on Theorem~\ref{thm:sym}, we can proceed to devise a more general result on any real square matrix as follows.\footnote{Moreover, using the similar arguments, plus Lemma~\ref{lemma:norm-red}, we are able to produce a nonconstructive proof to the well known results about singular value thresholding~\cite{cai2008singular} without any use of subgradient. We will not pursue in this direction as it is out of the scope of this paper.}
\begin{theorem} \label{thm:asymm}
For any square matrix $\mathbf{P}\in \mathbb{R}^{n\times n}$, the unique closed form solution to the optimization problem
\begin{equation} \label{eq:opt_asym}
\mathbf{M}^* = \mathop{\arg \min}\limits_\mathbf{M} \; \frac{1}{\mu}\Vert \mathbf{M}\Vert_* + \frac{1}{2}\Vert \mathbf{M} - \mathbf{P}\Vert_F^2, \;  \; \mathbf{M}\succeq \mathbf{0},
\end{equation}
takes the form
\begin{equation}
\mathbf{M}^* = \mathbf{Q}\; \Diag\left[\max(\bm{\lambda}-1/\mu, 0)\right] \mathbf{Q}^\top,
\end{equation}
whereby $\widetilde{\mathbf{P}}=\mathbf{Q}\bm{\Lambda} \mathbf{Q}^\top$, for $\bm\Lambda=\Diag\left(\bm{\lambda}\right)$, is the spectrum(eigen-) decomposition of $\widetilde{\mathbf{P}}=\left(\mathbf{P}+\mathbf{P}^\top\right)/2$ and $\max\left(\cdot, \cdot\right)$ should be understood element-wise.
\end{theorem}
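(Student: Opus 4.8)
The plan is to reduce the asymmetric problem \eqref{eq:opt_asym} to the symmetric one already settled in Theorem~\ref{thm:sym}, by exploiting the fact that the feasible set consists entirely of symmetric matrices. Since every feasible $\mathbf{M}$ obeys $\mathbf{M}\succeq\mathbf{0}$, it is necessarily symmetric (definiteness being defined only for symmetric matrices, as we stressed in the notation), so I may restrict attention to $\mathbf{M}\in\mathcal{S}^n$ throughout. The strategy is then to show that, over this symmetric feasible set, the objective depends on $\mathbf{P}$ only through its symmetric part $\widetilde{\mathbf{P}}$.

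First I would split $\mathbf{P}$ into its symmetric and skew-symmetric components, $\mathbf{P}=\widetilde{\mathbf{P}}+\mathbf{P}_a$, where $\widetilde{\mathbf{P}}=\left(\mathbf{P}+\mathbf{P}^\top\right)/2$ and $\mathbf{P}_a=\left(\mathbf{P}-\mathbf{P}^\top\right)/2$. Expanding the quadratic term for symmetric $\mathbf{M}$ gives
\begin{equation}
\Vert \mathbf{M}-\mathbf{P}\Vert_F^2 = \Vert \mathbf{M}-\widetilde{\mathbf{P}}\Vert_F^2 - 2\langle \mathbf{M}-\widetilde{\mathbf{P}}, \mathbf{P}_a\rangle + \Vert \mathbf{P}_a\Vert_F^2.
\end{equation}
The crucial step is showing that the cross term vanishes. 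Because $\mathbf{M}-\widetilde{\mathbf{P}}$ is symmetric while $\mathbf{P}_a$ is skew-symmetric, and because for symmetric $\mathbf{S}$ and skew-symmetric $\mathbf{A}$ one has $\langle \mathbf{S}, \mathbf{A}\rangle=\trace\left(\mathbf{S}\mathbf{A}\right)=\trace\left(\mathbf{A}^\top\mathbf{S}^\top\right)=-\trace\left(\mathbf{S}\mathbf{A}\right)=0$ (using symmetry, skew-symmetry, and the cyclic invariance of the trace), the middle term drops out identically.

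It follows that, over the symmetric feasible set, the objective in \eqref{eq:opt_asym} differs from
\begin{equation}
\frac{1}{\mu}\Vert \mathbf{M}\Vert_* + \frac{1}{2}\Vert \mathbf{M}-\widetilde{\mathbf{P}}\Vert_F^2
\end{equation}
only by the additive constant $\frac{1}{2}\Vert \mathbf{P}_a\Vert_F^2$, which does not influence the minimizing $\mathbf{M}$. Hence the asymmetric problem shares its unique minimizer with the symmetric problem whose data matrix is $\widetilde{\mathbf{P}}$, and invoking Theorem~\ref{thm:sym} with $\mathbf{S}=\widetilde{\mathbf{P}}$ delivers at once both the strict convexity/uniqueness guarantee and the stated eigenvalue-thresholding closed form in terms of the spectrum of $\widetilde{\mathbf{P}}$.

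The reduction is short, and the only point demanding care — the step I expect to be the \emph{de facto} crux — is the justification that the feasibility constraint $\mathbf{M}\succeq\mathbf{0}$ is precisely what forces $\mathbf{M}$ to be symmetric and thereby makes the symmetric--skew-symmetric cross term orthogonal under $\langle\cdot,\cdot\rangle$. Once that orthogonality is secured, the decoupling of $\mathbf{P}_a$ and the appeal to Theorem~\ref{thm:sym} are immediate, and no further computation is required.
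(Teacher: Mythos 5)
Your proposal is correct and follows essentially the same route as the paper's own proof: both observe that the constraint $\mathbf{M}\succeq\mathbf{0}$ forces $\mathbf{M}\in\mathcal{S}^n$, show that over this symmetric feasible set the objective depends on $\mathbf{P}$ only through $\widetilde{\mathbf{P}}=\left(\mathbf{P}+\mathbf{P}^\top\right)/2$ up to an additive constant, and then invoke Theorem~\ref{thm:sym}. The only difference is cosmetic, in the middle algebra: you decompose $\mathbf{P}$ into symmetric plus skew-symmetric parts and kill the cross term by trace orthogonality, whereas the paper uses $\Vert\mathbf{M}-\mathbf{P}\Vert_F^2=\Vert\mathbf{M}-\mathbf{P}^\top\Vert_F^2$ to average the two data terms and complete the square --- the same reduction in different clothing.
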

\begin{proof}
See Appendix~\ref{app:proof_assym}.
\end{proof}
The above two theorems (Theorem~\ref{thm:sym} and \ref{thm:asymm}) have enabled a fast solution to updating $\mathbf{J}$. Moreover, since they ensure
the symmetry of output $\mathbf{J}$ irrespective of the symmetry of $\mathbf{G}$, we can be assured the alternation optimization process converges to a solution of $\mathbf{J}$ that satisfies the constraint $\mathbf{J}\succeq \mathbf{0}$.
\section{Complexity Analysis and Scalability}\label{sec:imp}
For solving the ALM problems corresponding to robust LRR-PSD and robust LRR, the main computational cost per iteration comes from either eigen-decomposition of a symmetric matrix or SVD of a square matrix of the same size. In numerical linear algebra~\cite{golub1996matrix}, computing a stable SVD of matrix $\mathbf{X}\in\mathbb{R}^{n\times n}$ is to convert it to an symmetric eigen-decomposition problem on an augmented matrix
\[
\widetilde{\mathbf{X}} = \begin{bmatrix}
\mathbf{0}  & \mathbf{X}^\top  \\
\mathbf{X}  & \mathbf{0}
\end{bmatrix}.
\]
Hence from SVD to eigen-decomposition of comparable size, we can expect a constant factor of speedup that depends on the matrix dimension.
\begin{figure}
\centering
\includegraphics[width=0.35\textwidth]{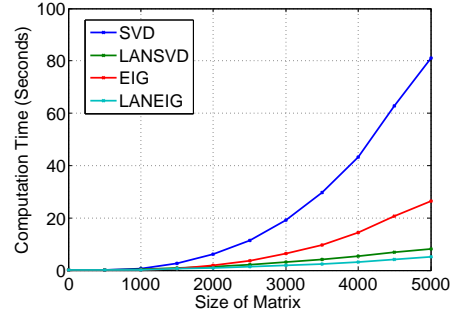} \vspace{-10pt}
\caption{\textbf{Comparison of computation time for full SVD/eigen- decomposition}. SVD and EIG are from Matlab built-in function (which essentially is wrapper for corresponding Lapack routines), and LANSVD, LANEIG from PROPACK. } \vspace{-10pt}
\label{fig:svd_comp}
\end{figure}
Figure~\ref{fig:svd_comp} provides benchmark results on computational times of SVD and eigen-decomposition on matrices of sizes ranging from very small up to $5000$. Tested solvers include these provided in Matlab and these in PROPACK~\cite{larsen-propack}. It is evident that for matrices of the same size, eigen-decomposition is significantly faster in both solver package. We will stick to the built-in functions in Matlab as we find in practice PROPACK is sometimes unstable when solving full problems (it is specialized in solving large and sparse matrices).

\section{Experiments} \label{sec:exp}
In this section, we systematically verify both the theoretic analysis provided before, and related claims.
\subsection{Experiment Setups}
We use two data sets throughout our experiments.
\\
\noindent \textbf{Toy Data} (TD). Following setting in~\cite{liu2010robust}, $5$ independent subspaces $\left\{\mathcal{S}_i\right\}_{i=1}^5\subset \mathbb{R}^{100}$ are constructed, whose bases $\left\{\mathbf{U}_i\right\}_{i=1}^5$ are generated by $\mathbf{U}_{i+1}=\mathbf{T}\mathbf{U}_i$, $1\leq i \leq 4$, where $\mathbf{T}$ represents a random rotation and $\mathbf{U}_1$ a random orthogonal matrix of dimension $100\times 4$. So each subspace has a dimension of 4. $20$ data vectors are sampled from each subspace by $\mathbf{X}_i=\mathbf{U}_i\mathbf{Q}_i$, $1\leq i \leq 5$ with $\mathbf{Q}_i$ being a $4\times 20$ iid zero mean unit variance Gaussian matrix $\mathcal{N}\left(0, 1\right)$. Collection of this clean $\mathbf{X}$ should have rank $20$.   \\
\noindent \textbf{Extended Yale B} (EYB). Following setting in~\cite{liu2010robust}, $640$ frontal face images of $10$ classes from the whole Yale B dataset are selected. Each class contains about $64$ images, and images are resized to $42\times 48$. Raw pixel values are stacked into data vectors of dimension $2016$ as features. This dataset is an example of heavily corrupted data.
\subsection{Equivalence of LRR-PSD and LRR}
\subsubsection{Spectrum Verification}
Recall the key to establish the equivalence of LRR-PSD and LRR lies with showing that the eigenvalues and singular values of $\mathbf{Z}^*$ are identical, with $1$ of multiplicity equal to the data rank and the rest 0's (Ref. Theorem~\ref{thm:our_unique} and the associated proof). In order to verify this, we use TD without introducing any noise, and hence the data matrix has rank $20$. We simulate the clean settings, i.e., LRR-PSD and LRR by gradually increasing the regularization parameter $\lambda$ of the robust versions \eqref{eq:robust-PSD} and \eqref{eq:robust-form}. Intuitively for large enough $\lambda$, the optimization tends to put $\mathbf{E}=\mathbf{0}$ and hence approaches the clean settings. Figure~\ref{fig:spec_clean} presents the results along the regularization path (0.1 $\sim$ 1). It is evident during the passing to $\lambda = 1$, the eigenvalue and singular value spectra match each other, and identically produce $20$ values of $1$ and the rest all $0$. This confirms empirically the correctness of our theoretic analysis.
\begin{figure}[!htbp]
  \centering
  \includegraphics[width=0.22\textwidth]{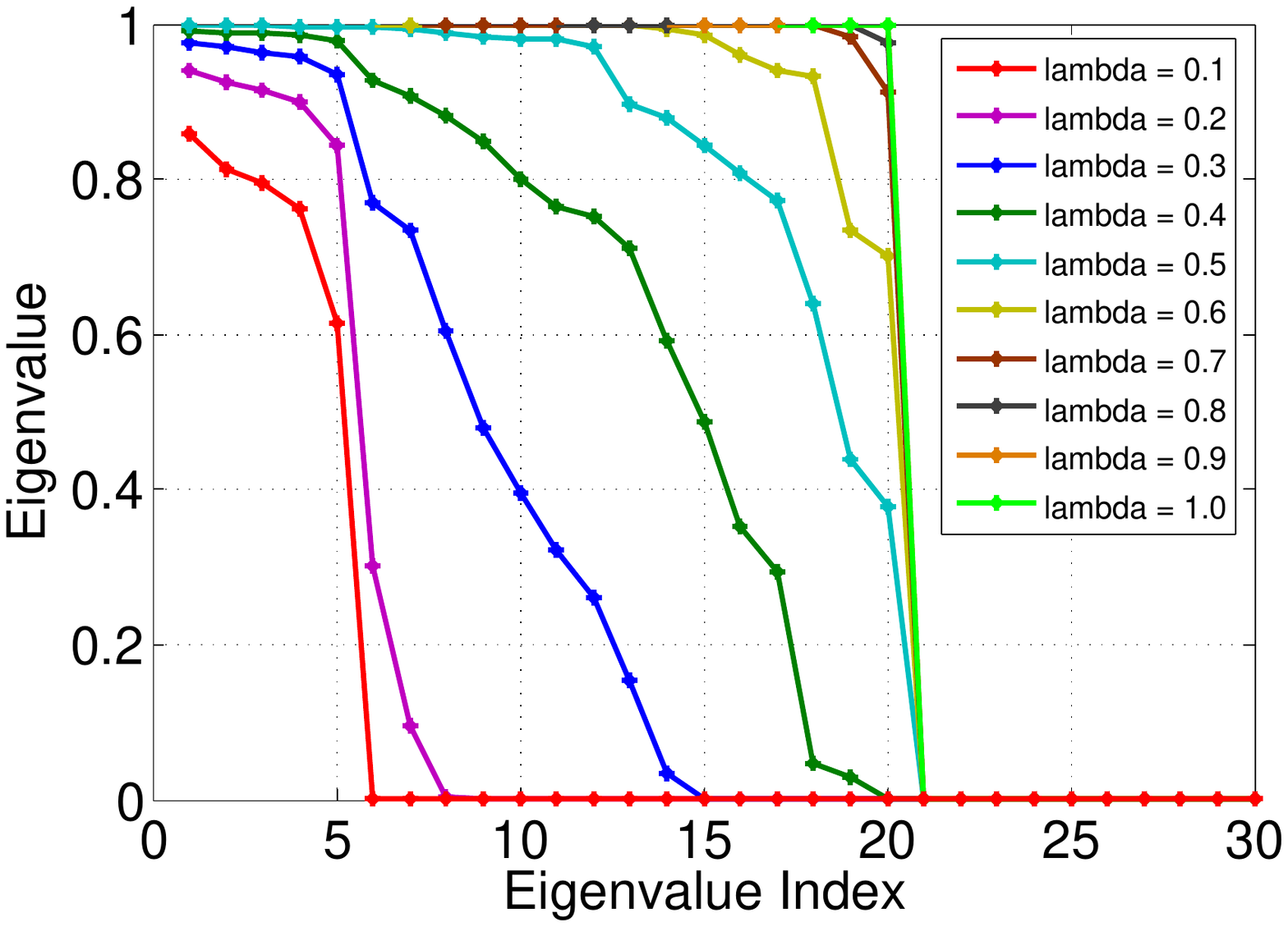}
  \includegraphics[width=0.22\textwidth]{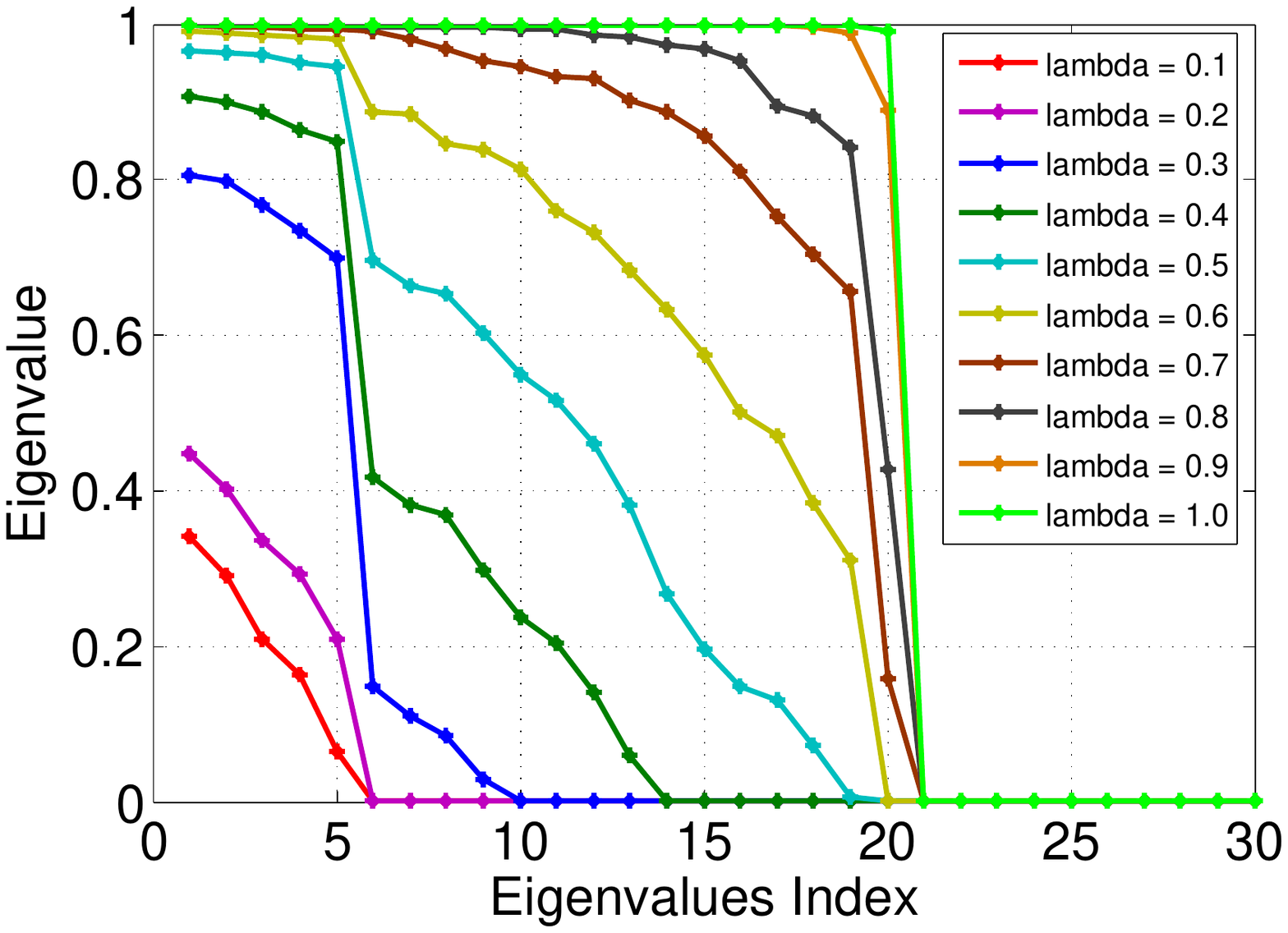}\\
  \includegraphics[width=0.22\textwidth]{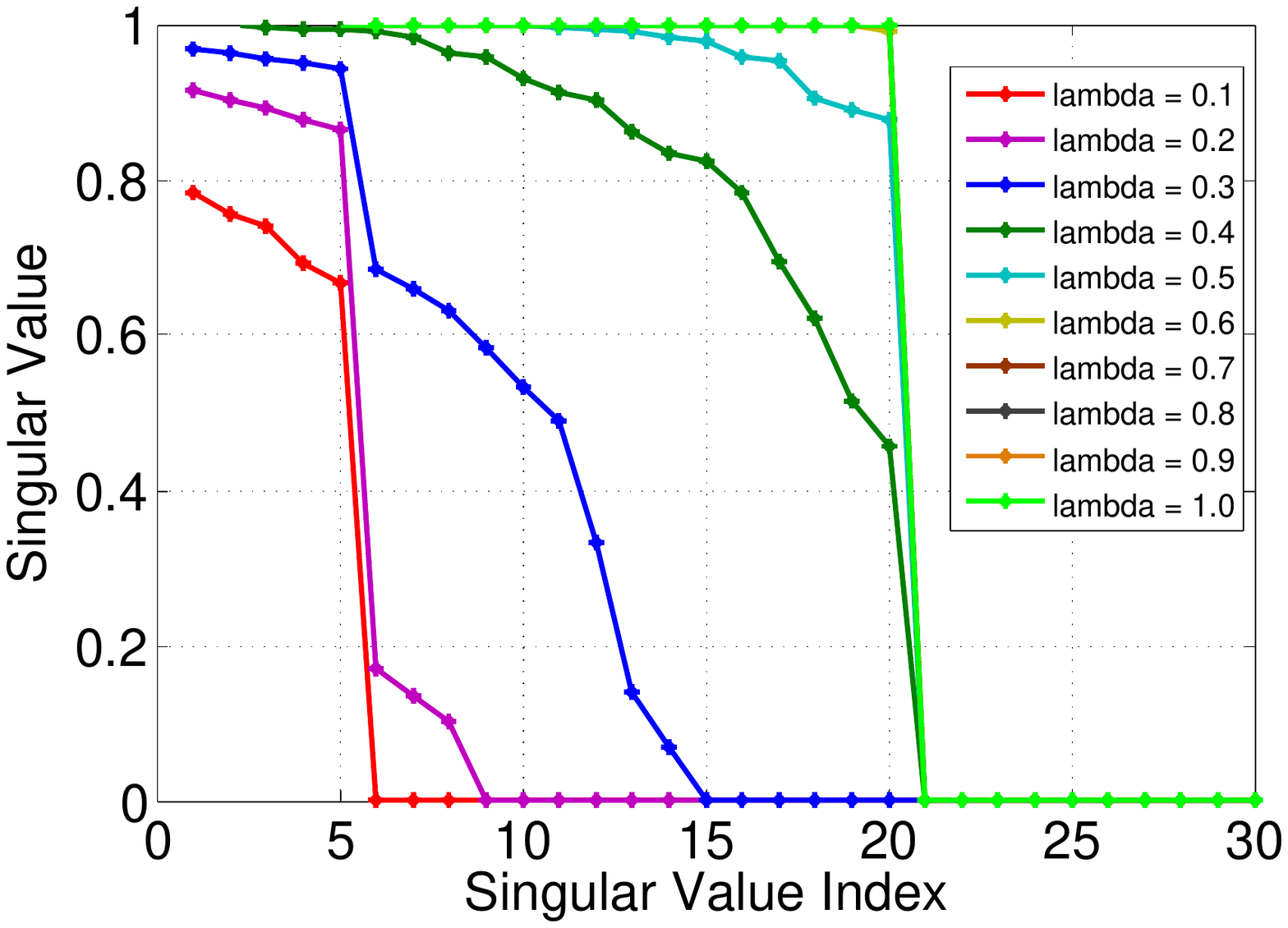}
  \includegraphics[width=0.22\textwidth]{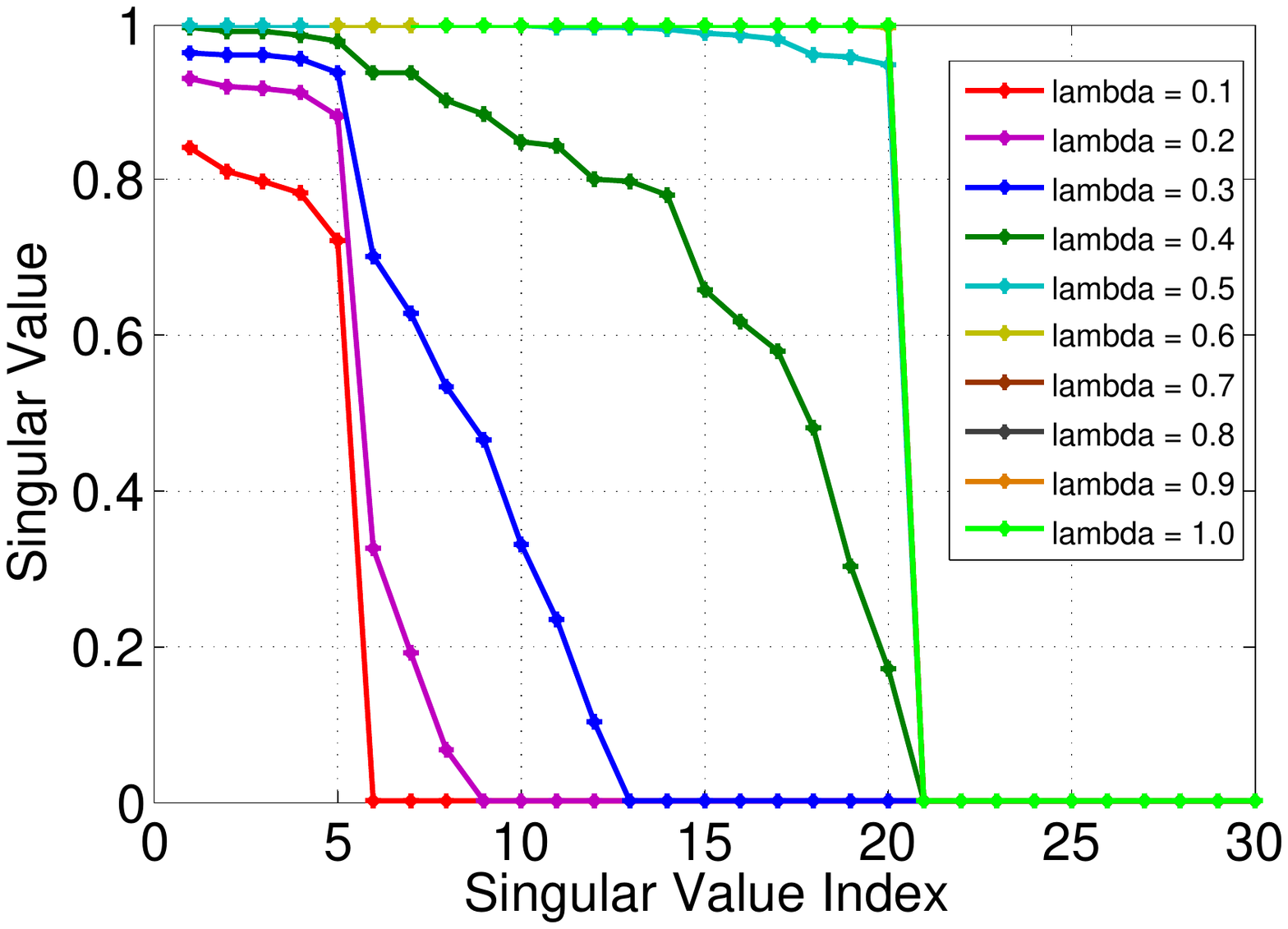} \vspace{-10pt}
  \caption{\textbf{Comparison of the eigen-spectrum (top) and singular value spectrum (bottom) for clean toy data (no artificial noises added) under the robust settings}. Increasing the value of $\lambda$ in the robust settings, or effectively passing towards the clean formulation, the optimal $\mathbf{Z}^*$ tends to produce $20$ nonvanishing eigenvalues/singular values of $1$. Left: by solving LRR. Right: by solving LRR-PSD. (Please refer to the color pdf and zoom in for better viewing effect.)} \vspace{-10pt}
  \label{fig:spec_clean}
\end{figure}
\subsubsection{Spectrum Perturbation Under Robust Setting} \label{sec:robust_spec}
As we conjectured in Sec.~\ref{sec:robust-set}, in most cases spectrum of the obtained affinity matrix from robust LRR-PSD or robust LRR will be perturbation of the ideal spectrum. Repeated experiments on many settings confirm about this, although we cannot offer a formal explanation to this yet. Here we only produce a visualization (Figure~\ref{fig:spec_pert}) to show how things evolve under different noise level when we set $\lambda=0.12$. The noise is added in sample-specific sense, as done in~\cite{liu2010robust}, i.e., some samples are chosen to be corrupted while others are kept clean. We do observe some breakdown cases when $\lambda$ is very small (not presented in the figure), which can be partially explained by that in that case the effect of nuclear norm regularization is weakened.
\begin{figure}[!htbp]
  \centering
  \includegraphics[width=0.22\textwidth]{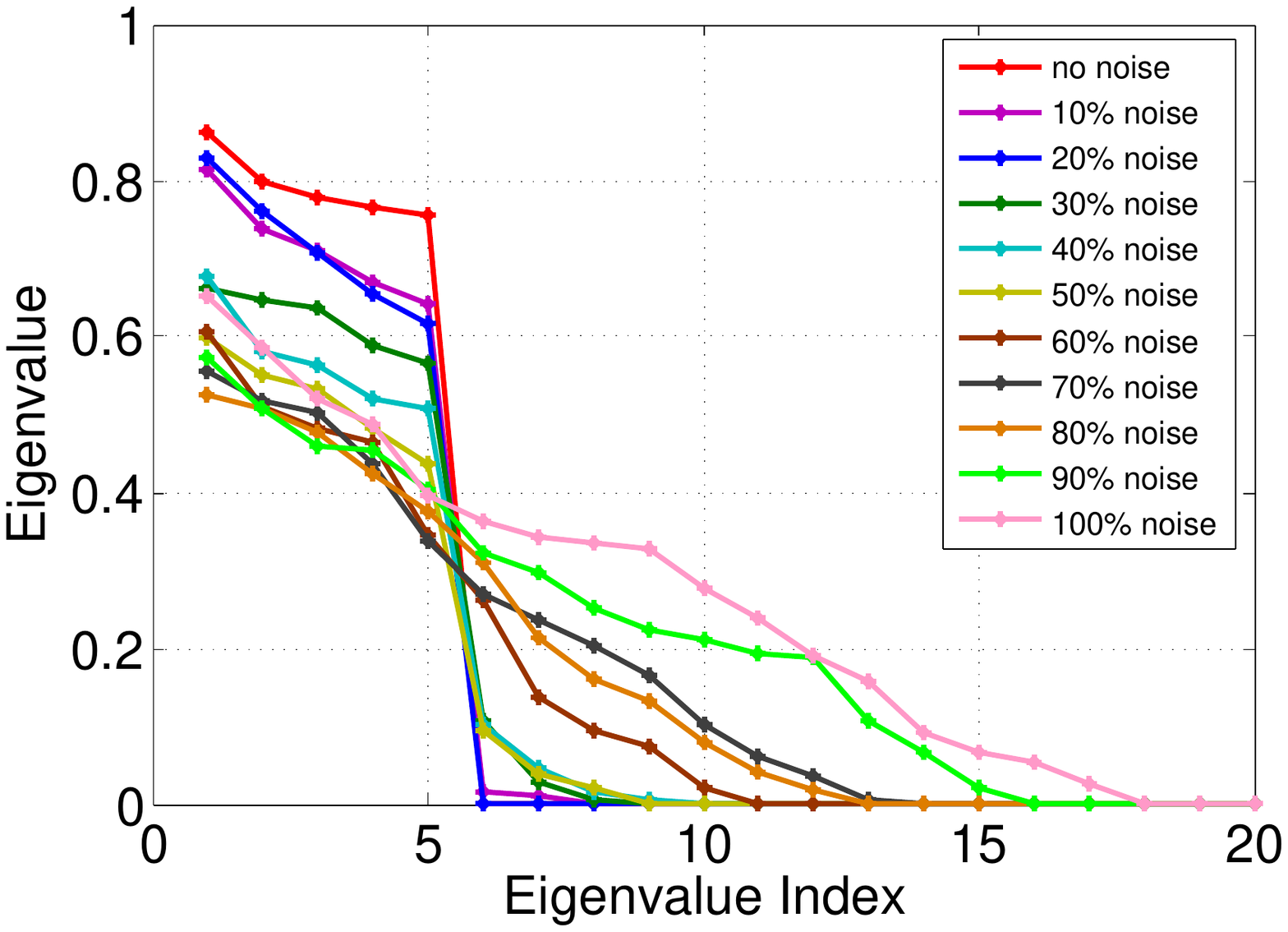}
  \includegraphics[width=0.22\textwidth]{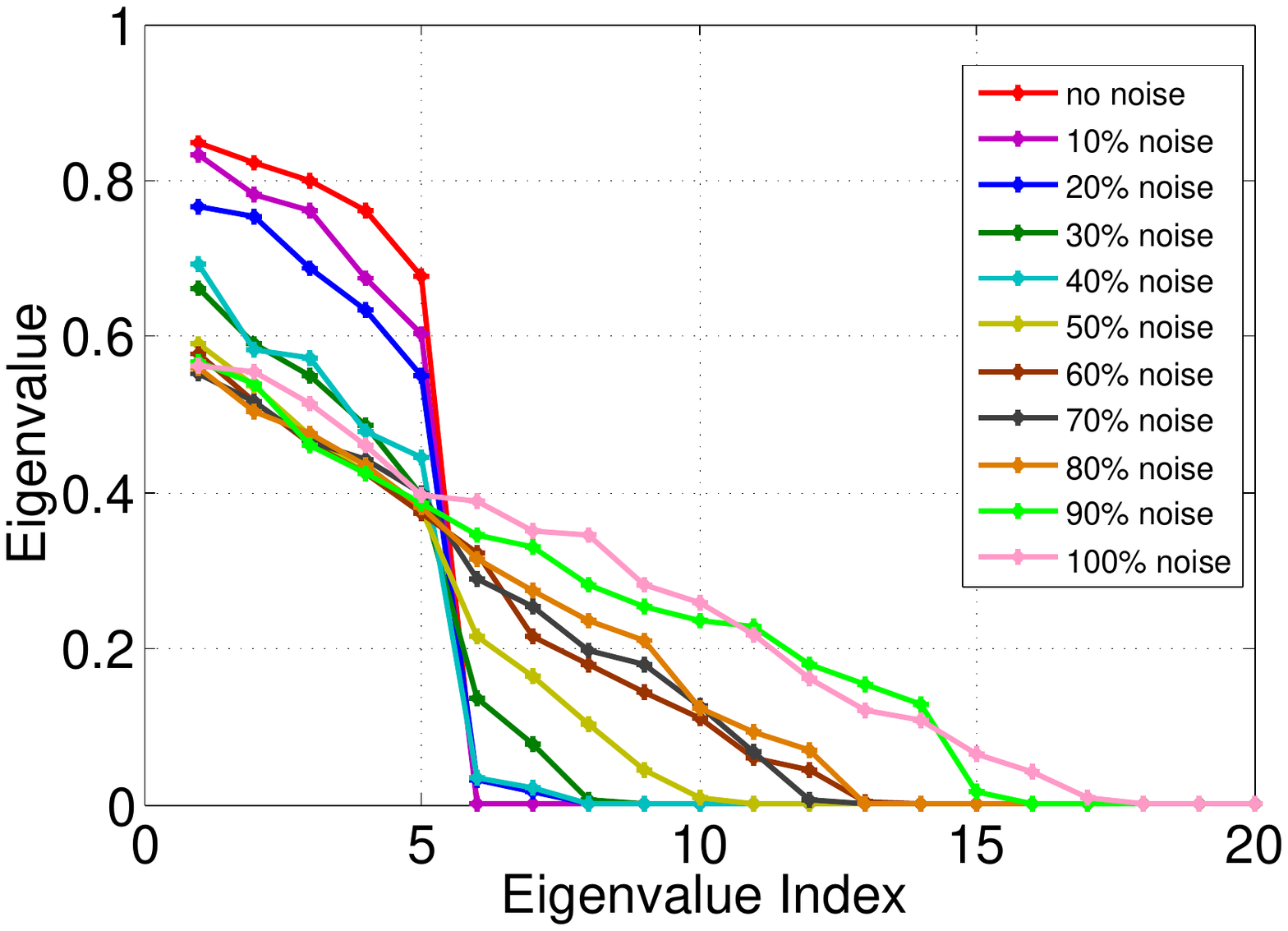} \vspace{-10pt}
  \caption{\textbf{Evolution of the eigen-spectrum of the learnt affinity matrix under different noise levels.} Left: solving by robust LRR; Right: solving by robust LRR-PSD. Surprisingly the spectra are always confined within $[0, 1]$ in this setting. (Please refer to the color pdf and zoom in for better viewing effect.)} \vspace{-10pt}
  \label{fig:spec_pert}
\end{figure}
\subsection{Selection of Noise Models} \label{sec:select}
We have argued that the norm selection for the noise term $\mathbf{E}$ should depend on the knowledge on noise patterns. We are going to compare the $\Vert \cdot\Vert_1$ noise model with the $\Vert \cdot\Vert_{2, 1}$ noise model used in~\cite{liu2010robust}.
\begin{figure}
  \centering
  \includegraphics[width=0.35\textwidth]{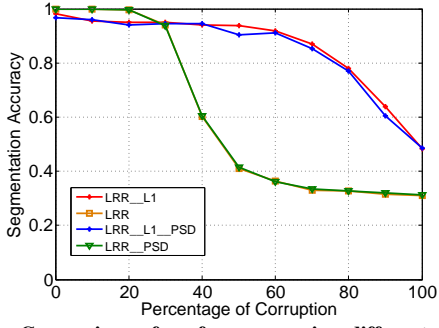} \vspace{-10pt}
  \caption{\textbf{Comparison of performance using different noise models.} In essence we use $\Vert\mathbf{E}\Vert_1$ and $\Vert \mathbf{E}\Vert_{2, 1}$ respectively in the objective. Instead of sample-specific noise, we assume random distributed noises, which is a more natural noise model. The $\ell_1$ version shows better resistance against noise. } \vspace{-10pt}
  \label{fig:norm_selection}
\end{figure}
First we test on TD. Instead of adopting a sample-specific noise assumption, we assume that the corruptions are totally random w.r.t. data dimension and data sample which is more realistic. We add Gaussian noise with zero mean and variance $0.3\Vert\mathbf{X}\Vert_F$, where $\mathbf{X}$ is the whole data collection. Percentage of corruption is measured against the total number of entries in $\mathbf{X}$. The evolution of SC performance against the percentage of corruption is presented in Figure~\ref{fig:norm_selection}. We can see the obvious better resistance against noises exhibited by the $\Vert\cdot \Vert_1$ form.
\subsection{Performance Benchmark: LRR-PSD vs. LRR}
We benchmark for the speed of robust LRR-PSD and robust LRR on EYB, and also present the clustering performance as compared to the conventional Gaussian kernel and linear kernel SC, which is obviously missing from~\cite{liu2010robust}.
\begin{table}[!htbp]
\caption{Segmentation accuracy ($\%$) on EYB. We record the average performance from multiple runs instead of the best, and reduce the dimension to 100 and 50 respectively in the bottom two rows.}
\centering
\vspace{2pt}
\begin{tabular}{c|c|c|c|c|c}
  \hline
    & Gauss SC & Linear SC & SSC  & LRR & LRR-PSD \\
    \hline
  Acc.          & 20.00     & 30.16 & 49.37 & \textbf{59.53} & \textbf{60.63} \\
  Acc. (100D)   & 22.66     & 27.97 & 49.38 & \textbf{61.56} & \textbf{60.00}       \\
  Acc. (50D)    & 24.84     & 27.97 & 49.22 & \textbf{62.83} & \textbf{61.81}    \\
  \hline
\end{tabular}
\label{tab:eyb_acc}
\end{table}
Table~\ref{tab:eyb_acc} presents the accuracy obtained via various affinity matrices for SC, with different setting of PCA pre-processing for noise removal\footnote{For SSC we used the implementation provided by the authors of~\cite{elhamifar2009sparse} with proper modification to their PCA routine.}. By comparison, obviously LRR-PSD and LRR win out and they are relatively robust against the PCA step, partially by virtue of their design to perform corruption removal together with affinity learning. To test the running time, we also include another set where each image in EYB is resized into $21\times 24$ (Set 1). We denote the original setting Set 2, and use the first $20$ classes of which each image resized into $42\times 48$ to produce Set 3. We report the running time (T), number of iterations (Iter), convergence tolerance (Tol) for each setting. Table~\ref{tab:eyb_sp} presents the results. Interestingly, LRR-DSP always converges with
\begin{table}[!htbp]
\caption{Running time and iterations on EYB. Advantage of LRR-PSD becomes significant as the data scale grows up.}
\centering
\vspace{2pt}
\begin{tabular}{c|c|c|c}
  \hline
  LRR/LRR-PSD & T (sec) & Iter & Tol \\
  \hline
  Set 1 & 271.87/\textbf{218.27} & \textbf{178}/330 & $10^{-6}$/$10^{-6}$ \\
  Set 2 & 475.23/\textbf{461.22} & \textbf{193}/496 & $10^{-6}$/$10^{-6}$ \\
  Set 3 & 3801.43/\textbf{2735.48} & \textbf{185}/392 & $10^{-6}$/$10^{-6}$ \\
  \hline
\end{tabular}
\label{tab:eyb_sp}
\end{table}
more iterations but less running time than that of LRR. The benefit of using eigen-decomposition in place of SVD is apparent.
\section{Summary and Outlook} \label{sec:conc}
In pursuit of providing more insights into recent line of research work on sparse-reconstruction based affinity matrix learning for subspace segmentation, we have discovered an important equivalence between the recently proposed LRR and our advocated version LRR-PSD in their canonical forms. This is a critical step towards understanding the behaviors of this family of algorithms. Moreover, we show that our advocated version, in its robust/denoising form, also facilitates a simple solution scheme that is as least as simple as the original optimization of LRR. Our experiments suggest in practice LRR-PSD is more likely to be flexible in solving large-scale problems.

Our current work is far from conclusive. In fact, there are several significant problems remained to be solved. First of all we observed in experiments the robust versions most of the times also produce affinity matrices with only positive eigenvalues, and themselves are very close to symmetric. We have not figured out ways to formally explain or even prove this. Furthermore, similar to the RPCA problem, it is urgent to provide theoretic analysis of the operational conditions of such formulation. From the computational side, SVD or eigen-decomposition on large matrices would finally become prohibitive. It would be useful to figure out ways to speed up nuclear norm optimization problems for practical purposes.

\section*{Acknowledgements}
This work is partially supported by project grant NRF2007IDM-IDM002-069 on ``Life Spaces" from the IDM Project Office, Media Development Authority of Singapore. We thank Prof. Kim-Chuan Toh, Mathematics Department of the National University of Singapore, for his helpful comments and suggestions to revision of the manuscript.

\appendix
\section{Appendix}
\subsection{Proof of Lemma~\ref{lem:norm_ine}}
\begin{proof}
Recall the fact that nuclear norm is dual to the spectral norm $\Vert\cdot\Vert_2$ (Lemma~\ref{lemma:duality}), the dual description follows
\begin{equation} \label{eq:nuclear_norm_super}
 \Vert \mathbf{X}\Vert_*
= \sup \left\{\left\langle \begin{bmatrix}
                                       \mathbf{Z}_{11} & \mathbf{Z}_{12} \\
                                       \mathbf{Z}_{21} & \mathbf{Z}_{22} \\
                                     \end{bmatrix},
                                      \begin{bmatrix}
                                       \mathbf{A} & \mathbf{B} \\
                                       \mathbf{C} & \mathbf{D} \\
                                     \end{bmatrix} \right\rangle\;\left\vert\;
                                     \left\Vert
                                     \begin{bmatrix}
                                       \mathbf{Z}_{11} & \mathbf{Z}_{12} \\
                                       \mathbf{Z}_{21} & \mathbf{Z}_{22} \\
                                     \end{bmatrix}
                                     \right\Vert_2 =1\right.
\right\},
\end{equation}
and similarly we also have
\begin{equation} \label{eq:nuclear_norm_sub}
\begin{aligned}
  & \left\Vert \begin{bmatrix}
   \mathbf{A} & \mathbf{0} \\
   \mathbf{0} & \mathbf{D} \\
 \end{bmatrix}
 \right\Vert_* \\
& =\sup
 \left\{\left\langle \begin{bmatrix}
                                       \mathbf{Z}_{11} & \mathbf{Z}_{12} \\
                                       \mathbf{Z}_{21} & \mathbf{Z}_{22} \\
                                     \end{bmatrix},
                                      \begin{bmatrix}
                                       \mathbf{A} & \mathbf{0} \\
                                       \mathbf{0} & \mathbf{D} \\
                                     \end{bmatrix} \right\rangle\;\left\vert\;
                                     \left\Vert
                                     \begin{bmatrix}
                                       \mathbf{Z}_{11} & \mathbf{Z}_{12} \\
                                       \mathbf{Z}_{21} & \mathbf{Z}_{22} \\
                                     \end{bmatrix}
                                     \right\Vert_2 =1\right.
\right\}  \\
& =\sup
\left\{\left\langle \begin{bmatrix}
                                       \mathbf{Z}_{11} & \mathbf{0} \\
                                       \mathbf{0} & \mathbf{Z}_{22} \\
                                     \end{bmatrix},
                                      \begin{bmatrix}
                                       \mathbf{A} & \mathbf{B} \\
                                       \mathbf{C} & \mathbf{D} \\
                                     \end{bmatrix} \right\rangle\;\left\vert\;
                                     \left\Vert
                                     \begin{bmatrix}
                                       \mathbf{Z}_{11} & \mathbf{0} \\
                                       \mathbf{0} & \mathbf{Z}_{22} \\
                                     \end{bmatrix}
                                     \right\Vert_2 =1\right.
\right\} \\
& = \Vert \mathbf{A}\Vert_* + \Vert \mathbf{D}\Vert_*.
\end{aligned}
\end{equation}
Since \eqref{eq:nuclear_norm_sub} is a supremum over a subset of that in \eqref{eq:nuclear_norm_super}, the inequality about the nuclear norm holds. The claim about the square of Frobenuis norm holds trivially from nonnegativeness of any block norm squares contributed to the total norm square.
\end{proof}
\subsection{Proof of Theorem~\ref{thm:asymm}} \label{app:proof_assym}
\begin{proof}
Similarly the program is strictly convex and we expect a unique minimizer. By the semi-definiteness constraint, we are only interested in $\mathbf{M}\in \mathcal{S}^n$. Hence $\Vert \mathbf{M} - \mathbf{P}\Vert_F^2 = \Vert \mathbf{M} - \mathbf{P}^\top\Vert_F^2$, which suggests the objective function can be cast in its equivalent form $1/\mu\Vert \mathbf{M} \Vert_*+$$1/4\Vert \mathbf{M-P}\Vert_F^2+$$1/4\Vert \mathbf{M-P}^\top\Vert_F^2$. Further we observe that
\begin{equation}
 \Vert \mathbf{M-P}\Vert_F^2 + \Vert \mathbf{M}-\mathbf{P}^\top\Vert_F^2 = \Vert \mathbf{M}- (\mathbf{P+P}^\top)/2\Vert_F^2 + \mathcal{C}\left(\mathbf{P}\right)
\end{equation}
where $\mathcal{C}(\mathbf{P})$
only depends on $\mathbf{P}$ (are hence constants)
. Hence we reach an equivalent formation of the original program Eq.~\eqref{eq:opt_asym} as
\begin{equation} \label{eq:opt_asym_ref}
\mathbf{M}^* = \mathop{\arg \min}\limits_\mathbf{M} \; \frac{1}{\mu}\Vert \mathbf{M}\Vert_* + \frac{1}{2}\Vert \mathbf{M} - \widetilde{\mathbf{P}}\Vert_F^2, \; \text{s.t.} \; \mathbf{M}\succeq \mathbf{0},
\end{equation}
with $\widetilde{\mathbf{P}}=(\mathbf{P+P}^\top)/2$. Solution to Eq.~\eqref{eq:opt_asym_ref} readily follows from Theorem~\ref{thm:sym}.
\end{proof}

\bibliographystyle{IEEEtran}
\bibliography{SSC}
\end{document}